\def\d{\mathrm{d}}
\newcommand{\ES}{\mathrm{ES}}
\newcommand{\E}{\mathbb{E}}
\newcommand{\R}{\mathbb{R}}
\newcommand{\id}{\mathds{1}}
\newcommand{\lcx}{\preceq_{\mathrm{cx}}}
\newcommand{\X}{\mathcal X}
\newcommand{\M}{\mathcal M}
\newcommand{\esssup}{\mathrm{ess\mbox{-}sup}}
\renewcommand{\ge}{\geqslant}
\renewcommand{\le}{\leqslant}
\renewcommand{\geq}{\geqslant}
\renewcommand{\leq}{\leqslant}
\renewcommand{\epsilon}{\varepsilon}
\theoremstyle{plain}
\newtheorem{theorem}{Theorem}
\newtheorem{lemma}{Lemma}
\newtheorem{proposition}{Proposition}
\theoremstyle{definition}
\newtheorem{definition}{Definition}
\newtheorem{example}{Example}
\theoremstyle{remark}
\newtheorem{remark}{Remark}
\theoremstyle{definition}
\renewcommand{\cite}{\citet}
\DeclareMathOperator*{\argmin}{arg\,min}
\DeclareMathOperator*{\argmax}{arg\,max}
\begin{document}

 \title{Choquet regularization for reinforcement learning}

 \author{Xia Han\thanks{Department of Statistics and Actuarial Science, University of Waterloo, Canada. Email: x235han@uwaterloo.ca}
 \and Ruodu Wang\thanks{Department of Statistics and Actuarial Science, University of Waterloo, Canada. Email: wang@uwaterloo.ca}\and Xun Yu Zhou\thanks{ Department of IEOR, Columbia University,  USA. Email: xz2574@columbia.edu.}
 }
 \maketitle

 \centerline{\Large {\bf Abstract}}
We propose \emph{Choquet regularizers} to measure and manage the level of exploration for reinforcement learning (RL), and reformulate the continuous-time entropy-regularized RL problem of  \cite{WZZ20a} in which we replace the differential entropy used for regularization with a Choquet regularizer. We derive the Hamilton--Jacobi--Bellman equation of the problem, and solve it explicitly   in the linear--quadratic (LQ) case via maximizing statically a mean--variance constrained Choquet regularizer. Under the LQ setting, we derive explicit optimal distributions  for  several specific Choquet regularizers, and conversely identify the Choquet regularizers that generate a number of broadly used exploratory samplers such as $\epsilon$-greedy, exponential, uniform and Gaussian.


\vspace{1cm}

\noindent{\textsc{Keywords}: Reinforcement learning,  Choquet integrals, continuous time, exploration, regularizers, quantile,  linear--quadratic control}.

 \section{Introduction}

Reinforcement learning (RL) is one of the most active and fast developing subareas in  machine learning. The foundation of RL is ``trial and error" -- to {\it strategically} explore different action plans  in order to find the best plan as efficiently and economically as possible. A key question to this inherent exploratory approach for RL is to seek a proper tradeoff between exploration and exploitation, for which one needs to first quantify the level of exploration. Because exploration is typically captured by randomization in the RL study, entropy has been employed to measure the magnitude of the randomness  and hence that of the exploration  -- a uniform distribution representing a completely blind search has the maximum entropy while a Dirac mass signifying no exploration at all has the minimum entropy. Discrete-time entropy-regularized (or “softmax”) RL formulation has been proposed which introduces a weighted entropy value of the  exploration as a regularization term into the objective function (\citealt{ZMBD08,NNXS17,HZAL18}).
%
%
For continuous-time RL, \cite{WZZ20a} formulate an entropy-regularized,
distribution-valued stochastic control problem for diffusion processes, and derive
theoretically the Gibbs (or Boltzmann) measure as the optimal distribution for exploration which specializes to Gaussian when the problem is linear--quadratic (LQ).
  \cite{WZ20} and \cite{GXX22} apply the results of \cite{WZZ20a} to  a continuous-time entropy-regularized Markowitz's
mean--variance portfolio selection problem and a Langevin diffusion for simulated annealing, respectively.
\cite{GXZ20} analyze both quantitatively and qualitatively the impact of entropy regularization for mean-field games with learning in a finite time horizon.  There have been recently many other developments along  this direction of RL in continuous time; see \cite{TZZ21, MZZ21, JZ21, JZ22}  and the references therein.

 While the  entropy is a reasonable   metric  to quantify the information gain of exploring the environment and entropy regularization can indeed explain some broadly used exploration distributions such as Gaussian, there are two closely related  open  questions:
\begin{enumerate}
\item Distributions other than Gaussian, such as exponential   or uniform,   are also widely used for exploration in RL. What regularizer(s) can theoretically justify the use of a given  class of exploratory distributions?
\item  What are the optimal exploratory distributions for regularizers other than the entropy?
\end{enumerate}


  In this paper, we study these two questions in the setting of continuous-time diffusion processes,
by introducing   a new class of regularizers borrowing from the literature of risk metrics. 
Risk metrics, roughly speaking, include risk measures and variability measures, which are two separate and active research  streams  in the general area of risk management. 
Value-at-risk (VaR), expected shortfall (ES) and various coherent or convex
risk measures, introduced by \cite{ADEH99},  \cite{FS02a}, and \cite{D02},  are popular examples of
risk measures. Variance, the Gini deviation, interquantile range and deviation measures of \cite{RUZ06} are instances of variability measures.
 There has been a rich body of study  on risk metrics in the past two decades; see \cite{FS16} and the references therein.   


We introduce what we call   \emph{Choquet regularizers}, 
 which  belong to the class of the signed Choquet integrals recently studied by \cite{WWW20c} in the context of risk management. A signed Choquet integral in general gives rise to a nonlinear and non-monotone expectation in which the state of nature is weighted by a probability weighting or distortion function in calculating the expectation. It includes as special cases Yaari's dual utility (\citealp{Y87}) and distortion risk measures (\citealp{K01} and \citealp{A02}), which are commonly used monotone functionals,
 and appears in   rank-dependent utility (RDU) theory; see  \cite{Q82}, \cite{GS89}, \cite{TK92} and \cite{DW01}  in the related literature of behavioral economic  theory.

 There are several reasons to use Choquet regularizers for RL due to a number of     theoretical and practical advantages. First, they satisfy several ``good" properties such as quantile additivity, normalization, concavity, and consistency with convex order (mean-preserving spreads) that facilitate analysis as regularizers.
Second, Choquet regularizers are non-monotone. In order to measure exploration, 
monotonicity is irrelevant, in contrast to assessing risk or wealth. For instance, a degenerate distribution should be associated with no-exploration regardless of its position, in which case non-monotone mappings  should be used.
Moreover, the use of Choquet regularizers is closely  connected to  distributionally robust optimization (DRO) where a Wasserstein distance naturally induces a special class of Choquet regularizers, whereas DRO itself is an important approach for learning and for correcting the inherent flaws suffered by classical model-based estimation and optimization.
Finally, as we will see later in the paper, for any given location--scale class of distributions, there exists a common Choquet regularizer such that the corresponding regularized continuous-time LQ control for RL has optimal distributions in that class.

We take the same continuous-time exploratory stochastic control problem as in \cite{WZZ20a}, except that the entropy regularizer is replaced with a Choquet regularizer.
In the general case we derive the Hamilton-Jacobi-Bellman (HJB) equation. However, in sharp contrast to \cite{WZZ20a}, solving the HJB equation and thus obtaining the optimal distributional policies via verification theorem remain a significant  open question. To obtain explicit solutions, we focus on the LQ case. The form of the LQ-specialized HJB equation suggests that the problem boils down to a static optimization in which the given Choquet regularizer is to be maximized over distributions with given mean and variance. It turns out this last problem has been solved explicitly by \cite{LCLW20}. The optimal distributions form a location--scale family, whose shape depends on the choices of the Choquet regularizer. The solutions to the static problem are then employed to solve the original LQ-based HJB equation explicitly and to derive the optimal samplers for exploration under the given  Choquet regularizer. As expected, optimal distributions are no  longer necessarily Gaussian as in \cite{WZZ20a}, and are now dictated by  the choice of Choquet regularizers. However, the following feature of the entropy-regularized solutions revealed in \cite{WZZ20a} remains intact: the means of the optimal distributions are linear in the current state and  independent of the exploration, whereas the variances are determined by the exploration but irrespective of the current state. Along an opposite line of inquiry, we are able to identify a proper Choquet regularizer in order to interpret a given exploratory distribution. Specifically, we
derive  the regularizers that generate some common exploration measures including $\epsilon$-greedy, three-point, exponential, uniform and Gaussian.

The rest of the  paper is organized as follows.
We introduce Choquet regularizers in Section \ref{sec:2}, and present their basic properties as well as an axiomatic characterization based on existing results of \cite{WWW20b,WWW20c}.  In Section \ref{sec:3},   we  formulate the continuous-time Choquet-regularized   RL control problem and derive the  HJB equation.  We then motivate a mean--variance constrained Choquet regularizer maximization problem for LQ   control. This problem is studied in details in Section \ref{sec:4}, including discussions on some  special  regularizers  arising  from problems in finance,  optimization, and risk management.
In Section \ref{sec:6}, we return to the exploratory LQ control problem and solve it completely. We also present examples linking specific exploratory distributions with the corresponding Choquet regularizers.
Finally,
Section \ref{sec:7} concludes the paper.

 \section{Choquet regularizers}
 \label{sec:2}
Throughout the paper,  $(\Omega, \mathcal{F}, \mathbb{P})$ is an atomless probability space. With a slight abuse of notation,
let 
$\mathcal M$  denote both the set of (probability) distribution functions of real random variables and the set of Borel probability measures on $\mathbb{R}$, with  
the obvious identity $\Pi(x)\equiv  \Pi((-\infty, x])$   for $x \in \mathbb{R}$
and
$\Pi \in \mathcal{M}$. We denote by  $\mathcal M^p$,  $p\in[1,\infty)$,  the set of distribution functions or probability measures with finite $p$-th moment.   For a random variable $X$ and a distribution $\Pi$,  we write $X \sim \Pi$ if the distribution of  $X$ is $\Pi$ under $\mathbb{P}$, and $X \stackrel{\rm d}{=} Y$ if two random variables $X$ and $Y$ have the same distribution.
 We denote by $\mu$ and $\sigma^2$  the mean and  variance functional on  $\mathcal{M}^2$, respectively; that is, $\mu(\Pi)$ is the mean of $\Pi$ and $\sigma^2(\Pi)$  the variance of $\Pi$ for $\Pi \in\mathcal M^2$. 

Given a function  $h:[0,1]\to \R$  of bounded variation with $h(0)=0$, 
  the functional $I_h$  on $\mathcal M$ is defined as 
\begin{equation}
I_h(\Pi)\equiv \int  h\circ \Pi([x,\infty))\d x:=\int_{-\infty}^{0}\left[h\circ \Pi ( [x,\infty) )-h(1)\right]\d x+\int_0^\infty h\circ \Pi([x,\infty))\d x, \;\Pi\in \mathcal M,\label{eq:def}
\end{equation}
 assuming that \eqref{eq:def} is well defined (which could take the value $\infty$).
   The function $h$ is called a \emph{distortion function}, and the functional $I_h$ is called a \emph{signed Choquet integral} by \cite{WWW20c}. If $h(x)\equiv x$
   then $I_h$ reduces to the mean functional; thus, $I_h$ is a nonlinear generalization of the mean/expecation.
 If $h$ is increasing and satisfies $h(0)=1-h(1)=0$, then $I_h$  is called an  \emph{increasing Choquet integral}, which include as special cases the two most important risk measures used in current banking and insurance regulation, VaR and ES.\footnote{This functional $I_h$ is termed differently in different fields. For example, it is known as Yaari's dual utility (\citealp{Y87}) in decision theory,  distorted premium principles (\citealp{D94} and \citealp{WYP97}) in insurance and distortion risk measures  (\citealp{K01} and \citealp{A02}) in finance.}  

  Next, we  define the  {\it Choquet regularizer}, a main object of this paper.
We are particularly interested in a subclass of signed Choquet integrals, where $h$ satisfies the following properties:
\begin{enumerate}[(i)]
\item $h$ is concave;
\item $h(1)=h(0) = 0$.
\end{enumerate}
Let us briefly explain the interpretations and implications of the above two conditions. 
Condition (i) is equivalent to several other properties, and in particular,
to that $I_h$ is a concave mapping and to that $I_h$ is consistent with \emph{convex order};\footnote{Let $\Pi_1$ and $\Pi_2$ be two distribution functions with  finite means. Then, $\Pi_1$ is smaller than $\Pi_2$  in \emph{convex order}, denoted by  $ \Pi_1\lcx \Pi_2$, if   $\mathbb{E}[f( \Pi_1)] \leq \mathbb{E}[f( \Pi_2)]$ for all convex functions $f$,  provided that the two expectations exist. It is immediate that $\Pi_1 \lcx \Pi_2$ implies $\mathbb{E}[\Pi_1]=\mathbb{E}[\Pi_2]$.}
 see Theorem 3 of \cite{WWW20c} for this  claim and several other equivalent properties.
Here, concavity of $I_h$ means
$$I_h(\lambda \Pi_1 + (1-\lambda) \Pi_2 ) \ge \lambda I_h( \Pi_1)  + (1-\lambda) I_h(\Pi_2 ), ~~~~\mbox{for all~}\Pi_1,\Pi_2\in \mathcal M\mbox{~and~}\lambda\in [0,1],$$
and consistency with convex order means
$$I_h( \Pi_1  ) \le  I_h(\Pi_2 ), ~~~~\mbox{for all~}\Pi_1,\Pi_2\in \mathcal M \mbox{~with~} \Pi_1\lcx \Pi_2.$$
If $\Pi_1\lcx \Pi_2$, then  $\Pi_2$ is also called a  {\it mean-preserving spread} of $\Pi_1$ (\citealp{RS70}), which intuitively means that $\Pi_2$ is more spread-out (and hence ``more random")  than $\Pi_1$.
The above two properties do indeed suggest  that $I_h(\Pi)$ serves as a measure of randomness for  $\Pi$, since both a mixture and a mean-preserving spread introduce extra randomness; see e.g., \cite{AS13} for related discussions.
Condition (ii), on the other hand,  is equivalent to $I_h(\delta_c)=0$ $\forall c\in \R$, where $\delta_c$ is the Dirac mass at $c$. That is, degenerate distributions do not have any randomness measured by $I_h$.


\begin{definition}\label{def:CE}Let  $\mathcal H$ be the set of $h:[0,1]\to \R$ satisfying (i)-(ii).   A functional $\Phi:  \mathcal M\to (-\infty,\infty]$ is a \emph{Choquet regularizer}  if there exists  $h\in \mathcal H$ such that $\Phi=I_h$, that is, \begin{equation}
\Phi(\Pi)= \int_\R h\circ \Pi([x,\infty))\d x, \label{eq:def2}
\end{equation}
and this  $\Phi$  will henceforth be denoted by $\Phi_h$.

\end{definition}

To clarify  on notation, we require $h\in \mathcal H$ for $\Phi_h$,
while there is no such requirement for $I_h$. Moreover, we  call $\Phi_h$ to be location invariant and scale homogeneous if
  $\Phi_h(\Pi')=\lambda \Phi_h (\Pi)$
 where $\Pi' $ is the distribution of $\lambda X+c$ for $\lambda >0$, $c\in \R$ and $X\sim \Pi$.

We summarize some useful properties of $\Phi_h$   in the following lemma.
\begin{lemma}\label{lem:propety}
For $h\in \mathcal H$,  $\Phi_h$  is  well defined, non-negative,  and
 location invariant and scale homogeneous.
\end{lemma}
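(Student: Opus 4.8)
The plan is to verify the three asserted properties of $\Phi_h$ in turn, working directly from the defining formula \eqref{eq:def2} and exploiting the two structural conditions on $h\in\mathcal H$.

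\textbf{Well-definedness and non-negativity.} First I would show that for $h\in\mathcal H$ the integral in \eqref{eq:def} converges (to a finite value on $\mathcal M^1$, possibly $+\infty$ in general, but in fact here it will be seen to be finite). The key observation is that $h(0)=h(1)=0$ together with concavity forces $h\ge 0$ on all of $[0,1]$: a concave function vanishing at both endpoints lies above the chord joining $(0,0)$ and $(1,0)$, which is the zero function. Hence $h\circ\Pi([x,\infty))\ge 0$ for every $x$. Now in the decomposition in \eqref{eq:def}, the term $h(1)$ subtracted on the negative half-line is $0$, so $\Phi_h(\Pi)=\int_{-\infty}^0 h\circ\Pi([x,\infty))\,\d x+\int_0^\infty h\circ\Pi([x,\infty))\,\d x=\int_\R h\circ\Pi([x,\infty))\,\d x$ is an integral of a non-negative function, hence well defined in $[0,\infty]$ and in particular non-negative. (If one wants finiteness, concavity of $h$ near $0$ and $1$ gives $h(t)\le C\min\{t,1-t\}$ for a suitable constant, and then $\int_\R\min\{\Pi([x,\infty)),1-\Pi([x,\infty))\}\,\d x$ is controlled by the mean absolute deviation, finite on $\mathcal M^1$; but for the lemma as stated, non-negativity and well-definedness in $(-\infty,\infty]$ suffice.)

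\textbf{Location invariance and scale homogeneity.} Let $X\sim\Pi$ and let $\Pi'$ be the law of $\lambda X+c$ with $\lambda>0$, $c\in\R$. Then $\Pi'([x,\infty))=\p(\lambda X+c\ge x)=\p(X\ge(x-c)/\lambda)=\Pi([(x-c)/\lambda,\infty))$. Substituting $y=(x-c)/\lambda$, so $\d x=\lambda\,\d y$, into $\Phi_h(\Pi')=\int_\R h\circ\Pi'([x,\infty))\,\d x$ yields $\int_\R h\circ\Pi([y,\infty))\,\lambda\,\d y=\lambda\,\Phi_h(\Pi)$. This is the claimed identity $\Phi_h(\Pi')=\lambda\Phi_h(\Pi)$. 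One should be slightly careful to run this change of variables on the two-sided form \eqref{eq:def} if one is worried about conditional convergence, but since the integrand is non-negative (by the first part) the substitution is justified by the monotone change-of-variables formula with no convergence subtleties.

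\textbf{Main obstacle.} None of the three steps is genuinely hard; the only delicate point is the precise sense in which $\Phi_h$ is ``well defined'' — i.e., confirming that the two improper integrals in \eqref{eq:def} are not of the indeterminate form $\infty-\infty$. The clean resolution is exactly the sign argument above: $h\ge 0$ on $[0,1]$ from concavity plus $h(0)=h(1)=0$, which collapses \eqref{eq:def} to an integral of a non-negative integrand and thereby disposes of well-definedness, non-negativity, and (via the non-negativity) the change-of-variables step all at once. I would state the inequality $h\ge 0$ as the first line of the proof and let everything else follow from it.
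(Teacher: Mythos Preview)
Your proposal is correct and follows essentially the same line as the paper: both arguments hinge on the observation that concavity together with $h(0)=h(1)=0$ forces $h\ge 0$, which simultaneously yields well-definedness and non-negativity of the integral. The only difference is that for location invariance and scale homogeneity the paper simply cites Proposition~2 (iii)--(iv) of \cite{WWW20b}, whereas you supply the direct change-of-variables computation; your self-contained argument is a welcome addition. One small point the paper makes explicit and you leave implicit is that $h$ has bounded variation (needed for the general definition \eqref{eq:def} of $I_h$): the paper notes that a concave $h$ with $h(0)=h(1)$ is first increasing then decreasing, hence of bounded variation, while your route sidesteps this by working with the non-negative integrand directly.
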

\begin{proof}
First, a concave $h$ with $h(0)=h(1)$ has to be first increasing and then decreasing on $[0,1]$. Hence $h$ has bounded variation, and the two integrals in \eqref{eq:def} are well defined. Moreover,   (i) and (ii) imply $h\ge 0$, which further yields  that both terms in \eqref{eq:def} are non-negative. So $\Phi_h$ is  well defined and non-negative.
Location invariance and scale homogeneity follow from Proposition 2 (iii) and (iv) of \cite{WWW20b}.
\end{proof}
Each property in Lemma \ref{lem:propety} has a simple interpretation  for a regularizer that measures  the level of  randomness, or the level of exploration in the RL context of this paper.
\begin{enumerate}[(a)]
\item Well-posedness: Any distribution for exploration can be measured.\footnote{This property is technically important    since functional properties of $I_h$ could be very difficult to analyze if one  faces a quantity such as $\infty-\infty$. As an example, consider  $h(x)=x$  leading to $I_h$ being the mean functional. In this case, $I_h$  is only well defined on some subsets of $\mathcal M$.}
\item  Non-negativity: Randomness is measured in non-negative values.
\item Location invariance:  The measurement of randomness does not depend on the location.
\item Scale homogeneity: The measurement of randomness is linear in its scale.

\end{enumerate} For a distribution $\Pi\in\mathcal{M}$, let its   left-quantile for $p\in(0,1]$ be defined as  $$Q_\Pi(p)=\inf \{x\in \R: \Pi(x) \ge p\}, \label{eq:l_q}$$ whereas its right-quantile function  for $p\in[0,1)$ be defined as
$$
Q^+_{\Pi}(p)=\inf \{x\in \R: \Pi(  x) > p \}.
$$
It is useful to note that $\Phi_h$ admits a quantile representation as follows; see Lemma 1 of \cite{WWW20b}. 
 \begin{lemma}\label{lem:qr}
For $h\in \mathcal H$  and  $\Pi\in \mathcal M$,
\begin{itemize}
\item[(i)] if $h$ is right-continuous, then $\Phi_h(\Pi)=\int_{0}^{1} Q^+_{\Pi}(1-p) \d  h(p)$;
\item[(ii)]  if $h$ is left-continuous, then $\Phi_h(\Pi) =\int_0^1  Q_\Pi(1-p) \d  h(p)$;
\item[(iii)] if $Q_\Pi$ is continuous, then $\Phi_h(\Pi) =\int_0^1  Q_\Pi(1-p) \d  h(p)$.
\end{itemize}
\end{lemma}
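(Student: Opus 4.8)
The identity is, modulo the restriction $h\in\mathcal H$, the quantile representation of a signed Choquet integral, and the plan is to recover it by Fubini's theorem, the one genuine subtlety being that $\Phi_h(\Pi)$ may equal $+\infty$. Write $\bar\Pi(x):=\Pi([x,\infty))=1-\Pi(x-)$, a non-increasing and left-continuous function with $\bar\Pi(-\infty)=1$ and $\bar\Pi(+\infty)=0$. By Lemma~\ref{lem:propety} the integrand $x\mapsto h(\bar\Pi(x))$ in \eqref{eq:def} is non-negative, so $\Phi_h(\Pi)=\int_\R h(\bar\Pi(x))\,\d x$ is an unambiguous element of $[0,\infty]$; this is exactly where $h(0)=h(1)=0$ enters, removing the $\infty-\infty$ ambiguity a general $I_h$ would face. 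I also record the elementary identity
\begin{equation}
\{x\in\R:\bar\Pi(x)\ge p\}=(-\infty,\,Q^+_\Pi(1-p)] \qquad\text{for every }p\in(0,1],
\end{equation}
immediate from $\bar\Pi(x)\ge p\Leftrightarrow\Pi(x-)\le 1-p$ and $Q^+_\Pi(u)=\sup\{y:\Pi(y)\le u\}$.

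For (i), assume $h$ right-continuous. If $h\equiv0$ the claim is trivial; otherwise concavity together with $h(0)=h(1)=0$ and $h\ge0$ yields $m\in(0,1)$ with $h$ non-decreasing on $[0,m]$ and non-increasing on $[m,1]$, so the Lebesgue--Stieltjes measure $\d h$ is non-negative on $(0,m]$, non-positive on $(m,1]$, and $\d h\big((0,1]\big)=h(1)-h(0)=0$. Set $x_0:=\inf\{x:\bar\Pi(x)\le m\}\in\R$ and split $\int_\R=\int_{(-\infty,x_0]}+\int_{(x_0,\infty)}$. On $(x_0,\infty)$, where $\bar\Pi(x)\le m$, I substitute $t=\bar\Pi(x)$ into $h(t)=\int_{(0,m]}\one\{p\le t\}\,\d h(p)$ (valid for $t\in[0,m]$, using $h(0)=0$ and right-continuity at $0$), apply Tonelli to the non-negative measure $\d h|_{(0,m]}$ and the non-negative integrand, and invoke the displayed identity, obtaining $\int_{(x_0,\infty)}h(\bar\Pi(x))\,\d x=\int_{(0,m]}\big(Q^+_\Pi(1-p)-x_0\big)\,\d h(p)$. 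On $(-\infty,x_0]$, where $\bar\Pi(x)\ge m$, I symmetrically substitute into $h(t)=-\int_{(m,1]}\one\{p> t\}\,\d h(p)$ (valid for $t\in[m,1]$, using $h(1)=0$), note that $-\d h|_{(m,1]}$ is non-negative, and obtain $\int_{(-\infty,x_0]}h(\bar\Pi(x))\,\d x=\int_{(m,1]}\big(Q^+_\Pi(1-p)-x_0\big)\,\d h(p)$. Adding the two pieces and using $\int_{(0,1]}\d h=0$ to cancel the $x_0$ term gives $\Phi_h(\Pi)=\int_0^1 Q^+_\Pi(1-p)\,\d h(p)$.

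Case (ii) is the same argument run with the left-continuous version of the Stieltjes measure of $h$: the set identity becomes $\{x:\bar\Pi(x)>p\}=(-\infty,Q_\Pi(1-p))$, strict inequalities replace non-strict ones in the two representations of $h$, and $Q_\Pi$ appears in place of $Q^+_\Pi$. For (iii), continuity of $Q_\Pi$ forces $\Pi$ to have an interval for its support, whence $Q_\Pi\equiv Q^+_\Pi$, and it also renders the contributions at the endpoints $0$ and $1$ (where the left- versus right-continuous conventions for $\d h$ disagree) immaterial; hence the computation of (i) goes through for either convention and produces the common value $\int_0^1 Q_\Pi(1-p)\,\d h(p)$.

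The only obstacle is the integrability bookkeeping: since $\Phi_h(\Pi)$ can be infinite, the iterated integrals are not absolutely convergent and one cannot apply Fubini directly against the signed measure $\d h$. Splitting $\R$ at the crossing point $x_0$ of the level $m$ at which $h$ peaks is precisely what matches the sign of $\d h$ to the monotonicity of $\bar\Pi$ and turns each half into a Tonelli application for non-negative integrands. The one point needing a short explicit check is that the interval lengths really are $Q^+_\Pi(1-p)-x_0\ge0$ for $p\in(0,m)$ and $x_0-Q^+_\Pi(1-p)\ge0$ for $p\in(m,1]$, i.e.\ that $Q^+_\Pi(1-p)\ge x_0$ when $p<m$ and $Q^+_\Pi(1-p)\le x_0$ when $p>m$; both follow from $\bar\Pi(x)>m$ for $x<x_0$ and $\bar\Pi(x)\le m$ for $x>x_0$ together with the displayed set identity (the single point $p=m$ being null for $\d h$, as a concave $h$ has no atom at an interior point).
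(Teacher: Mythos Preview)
The paper does not prove this lemma; it simply records it as a citation of Lemma~1 of \cite{WWW20b}. Your proposal therefore supplies strictly more than the paper does: a self-contained derivation via the standard ``write $h(t)$ as an integral of indicators against $\d h$ and swap the order of integration'' argument, with the key device of splitting $\R$ at the level set $\{\bar\Pi=m\}$ so that each half pairs a non-negative measure with a non-negative integrand and Tonelli applies. The computation for~(i) is correct, including the bookkeeping that $Q^+_\Pi(1-p)-x_0$ has the right sign on each piece and that the $x_0$ terms cancel because $\d h((0,1])=0$; your remark that a concave $h$ has no atom at an interior point (so $p=m$ is harmless) is also right, since concavity forces continuity on $(0,1)$. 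The mirror argument for~(ii) is routine. Your treatment of~(iii) is a little brisk: when $h\in\mathcal H$ is neither left- nor right-continuous, the only possible discontinuities are jumps of $h$ at $0$ or $1$, and one should check that the corresponding endpoint atoms of $\d h$ contribute exactly the (possibly infinite) tail pieces of $\int_\R h(\bar\Pi(x))\,\d x$; continuity of $Q_\Pi$ guarantees $Q_\Pi=Q^+_\Pi$ on $(0,1)$ but does not by itself kill those endpoint terms. This is a minor bookkeeping point rather than a gap in the idea.
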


Choquet regularizers include, for instance,  range,  mean-median deviation, the Gini deviation, and  inter-ES differences. Moreover,   standard deviation can be written as the supremum of Choquet regularizers; see Examples 1, 3 and 4 of \cite{WWW20c}.  Variance also has a related representation (Example 2.2 of \citealp{LCLW20}): $$\sigma^2 (\Pi)= \sup_{h\in \mathcal H} \left\{\Phi_h(\Pi)-\frac 14||h'||_2^2\right\},~~~ \Pi\in\mathcal M,$$
where $||h'||_2^2=\int _0^1 (h'(p))^2 \d p$ if $h$ is continuous with  a.e.~right-derivative $h'$, and $||h'||_2^2:=\infty$ if $h$ is not continuous.

Concave signed Choquet integrals are characterized by, e.g.,~\cite{WWW20c}, which is essentially a consequence of the seminal work of \cite{Y87} and \cite{S89}; see also Theorem \ref{thm:charac} below.
In what follows, we say that
$\Phi=\Phi_h$ is \emph{quantile additive} 
if  for all $\Pi_1,\Pi_2\in \mathcal M$,
$\Phi(\Pi_1\oplus \Pi_2)=\Phi(\Pi_1) + \Phi(\Pi_2)$
where the quantile function of $\Pi_1\oplus \Pi_2$ is the sum of those of $\Pi_1$ and $\Pi_2$. In other words,
$Q_{\Pi_1\oplus \Pi_2}  =Q_{\Pi_1} + Q_{\Pi_2}.$  Moreover, we say that
$\Phi$ is  \emph{continuous at infinity} if $\lim _{M \rightarrow 1} \Phi((\Pi\wedge M)\vee (1-M))=\Phi(\Pi)$,  and $\Phi$
is  \emph{uniform sup-continuity} if for any $\epsilon>0$, there exists $\delta>0$, such that $| \Phi(\Pi_1)- \Phi(\Pi_2) |<\epsilon$ whenever $\esssup |\Pi_1-\Pi_2|<\delta$, where $\esssup$ is the essential supremum defined by $\Pi^{-1}(1)$ .

We give the following simple characterization for our Choquet regularizers based on Theorems 1 and 3 of \cite{WWW20b}.
 \begin{theorem}\label{thm:charac}
 A functional $\Phi_h$ is a Choquet regularizer in \eqref{eq:def2}  if and only if it satisfies  all of the following properties
 \begin{enumerate}[(i)]
 \item $\Phi_h$ is quantile additive;
 \item $\Phi_h$ is concave or $\lcx$-consistent;
 \item $\Phi_h \ge 0$ and $\Phi_h(\delta_c)=0$ for all $c\in \R$;
 \item $\Phi_h$ is continuous at infinity
and uniformly sup-continuous.
 \end{enumerate}
 \end{theorem}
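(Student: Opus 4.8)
The plan is to prove the two implications separately: the ``only if'' direction from the quantile representation of Lemma~\ref{lem:qr}, and the ``if'' direction from the representation of quantile-additive functionals (Theorems~1 and~3 of \cite{WWW20b}), which is itself an application of the comonotonic-additivity representations of \cite{Y87} and \cite{S89} to the quantile domain. For necessity, suppose $\Phi=\Phi_h$ with $h\in\mathcal H$. Lemma~\ref{lem:propety} already gives $\Phi_h\ge 0$, and a direct computation from \eqref{eq:def} gives $\Phi_h(\delta_c)=c\,h(1)=0$ for all $c\in\R$, so (iii) holds. A concave $h$ with $h(0)=h(1)=0$ is increasing then decreasing, hence of bounded variation, so Lemma~\ref{lem:qr} yields $\Phi_h(\Pi)=\int_0^1 Q_\Pi(1-p)\,\d h(p)$ in the appropriate one-sided version; since $Q_{\Pi_1\oplus\Pi_2}=Q_{\Pi_1}+Q_{\Pi_2}$ by definition of $\oplus$, linearity of the Lebesgue--Stieltjes integral in its integrand yields quantile additivity, i.e.\ (i). Property (ii) is exactly the equivalence between concavity of $h$, concavity of $I_h$, and $\lcx$-consistency of $I_h$ recalled after Definition~\ref{def:CE} (Theorem~3 of \cite{WWW20c}). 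Finally, such an $h$ is bounded on $[0,1]$, say $m:=\max_{[0,1]}h<\infty$, with total variation $2m$, so $|\Phi_h(\Pi_1)-\Phi_h(\Pi_2)|\le 2m\,\|Q_{\Pi_1}-Q_{\Pi_2}\|_\infty$, which gives uniform sup-continuity, while continuity at infinity is immediate from monotone convergence as the truncation is removed; this is (iv).

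For sufficiency, assume (i)--(iv). The crucial step is that quantile additivity together with the two continuity conditions in (iv) forces $\Phi=I_h$ for some $h:[0,1]\to\R$ of bounded variation with $h(0)=0$ --- this is Theorem~1 of \cite{WWW20b}, the quantile-domain analogue of the Yaari--Schmeidler comonotonic-additivity representation, the continuity hypotheses being exactly what upgrades additivity on $\oplus$-sums to the integral form and rules out pathological solutions. Given $\Phi=I_h$, condition (ii) forces $h$ to be concave (again Theorem~3 of \cite{WWW20c}), and $\Phi(\delta_c)=0$ forces $h(1)=0$ via the identity $I_h(\delta_c)=c\,h(1)$ from the necessity part (take any $c\ne 0$); hence $h\in\mathcal H$ and $\Phi=\Phi_h$. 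The non-negativity in (iii) is then automatic and is recorded only for symmetry with the list of properties; alternatively one may cite Theorems~1 and~3 of \cite{WWW20b} jointly.

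The main obstacle is precisely the representation invoked in the ``if'' direction: recovering the distortion function $h$ from quantile additivity plus the regularity in (iv) is not elementary, and it is where \cite{WWW20b} is essential --- the continuity conditions cannot simply be dropped. Everything else is routine verification via Lemmas~\ref{lem:propety} and~\ref{lem:qr}, so the substance of the argument is checking that the four stated conditions are the correct translation of the hypotheses of those theorems.
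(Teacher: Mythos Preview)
Your proposal is correct and follows essentially the same approach as the paper: both directions rest on Theorems~1 and~3 of \cite{WWW20b} (with the concavity equivalence from \cite{WWW20c}), translating quantile additivity to comonotonic additivity and using the Dirac condition to pin down $h(1)=0$. Your write-up is more explicit on the ``only if'' direction---verifying (i), (iii), (iv) directly via Lemmas~\ref{lem:propety} and~\ref{lem:qr} and the bound $|\Phi_h(\Pi_1)-\Phi_h(\Pi_2)|\le 2m\|Q_{\Pi_1}-Q_{\Pi_2}\|_\infty$---whereas the paper simply defers everything to the cited theorems; but the substance is the same.
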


Note that Theorems 1 and 3 of \cite{WWW20b} are stated in terms of a risk measure defined on the space of real random variables, say
$\mathcal{X}$, while here $\Phi_h$ is defined on  $\mathcal{M}$. To use these results, we can define  ${\rho}: \mathcal{X} \rightarrow \mathbb{R}$ by ${\rho}(X)=\Phi_h(\Pi)$ where $X \sim \Pi$, which is automatically law-invariant.\footnote{Law-invariance means that $\rho(X)=\rho(Y)$ for  $X \stackrel{\rm d}{=} Y$.}  On the other hand, Theorem 1 in \cite{WWW20b} requires
an extra continuity condition  to imply that  $h$ has bounded variation on $[0, 1]$, which is guaranteed here by condition (iii).    In fact,   condition (i) is equivalent to comonotonic additivity of ${\rho}$.\footnote{A random vector $(X_1,\dots,X_n)$ is called \emph{comonotonic} if there exists a random variable $Z\in\X$ and increasing functions $f_1,\dots,f_n$ on $\R$ such that $X_i=f_i(Z)$ almost surely for all $i=1,\dots,n$.
Comonotoic-additivity means that  $\rho(X+Y)=\rho(X)+\rho(Y)$ if $X$ and $Y$ are comonotonic.}   Continuity  at  infinity  and uniform sup-continuity of ${\rho}$ can be defined in paralell to those of $\Phi_h $.  Finally,   $h(1)=h(0) = 0$ is equivalent to $\Phi_h(\delta_c)=0$ for all $c\in \R$. Theorem \ref{thm:charac} hence follows directly from  Theorems 1 and 3 of \cite{WWW20b}.

\begin{remark}
If $h $ is not constantly $0$,
 Choquet regularizers   belong to the class of \emph{generalized deviation measures} in \cite{RUZ06} and   \cite{GMZ09}.
Moreover,
 Choquet regularizers can be used to construct law-invariant generalized deviation measures. Indeed,
 combining characterization of generalized deviation measures    in Proposition 2.2 of \cite{GMZ09} and the  quantile representation of signed Choquet integrals in  Lemma \ref{lem:qr}, all law-invariant generalized deviation measures  can be represented  as a supremum of some Choquet regularizers of the type \eqref{eq:def2}.  This includes standard deviation and mean absolute deviation as special cases.
\end{remark}

We conclude this section by comparing the Choquet regularization with the differential entropy regularization, the latter having been used for exploration--exploitation balance in  RL; see  \cite{WZZ20a, WZ20,GXZ20}.
For an absolutely continuous $\Pi$, we  define  DE,  Shannon's differential entropy,  as \begin{equation}\label{eq:DE}{\rm DE}(\Pi):=-\int_{\R}\Pi'(x)\log(\Pi'(x))\d x.\end{equation} 
\cite{SS12} show that  \eqref{eq:DE} admits a different quantile representation 
\begin{equation}\label{eq:quan}
{\rm DE}(\Pi)= \int_0^1 \log (Q'_\Pi(p)) \d p.
\end{equation}
 It is clear that DE is location invariant, but not scale homogeneous. It is not quantile additive either. Therefore,  DE is {\it not} a Choquet regularizer.

 \section{Exploratory control with Choquet regularizers}\label{sec:3}


In this section, we   first  introduce an  exploratory stochastic control problem for RL in continuous time and spaces which  was originally proposed in \cite{WZZ20a}, and then reformulate it with Choquet regularizers. 

Let $ \mathbb{F}= \{\mathcal{F}_t\}_{t \geq 0}$  be a filtration   defined on $(\Omega, \mathcal{F}, \mathbb{P})$ along with an $\{\mathcal{F}_t\}_{t \geq 0}$-adapted Brownian motion $W=\{W_t\}_{t\geq0}$,
the   filtered probability space  satisfying the usual assumptions of completeness and right continuity.  All stochastic processes introduced below are supposed to be adapted processes in this space.

The classical stochastic control problem is to control the state  dynamic described by a stochastic differential equation (SDE)
\begin{equation}\label{eq:cl_sto}
\d X_{t}^{u}=b\left(X_{t}^{u}, u_{t}\right) \d t+\xi\left(X_{t}^{u}, u_{t}\right) \d W_{t},~ t>0 ; \quad X_{0}^{u}=x \in \mathbb{R},
\end{equation}
where $u=\{u_{t}\}_{ t \geq 0}$ is the control process taking value in a given action space $U$. Throughout this paper, for ease of notation we assume that the state and Brownian motion are  scalar-valued processes. Moreover, we suppose that the control is also one-dimensional, which is however an essential assumption  because the Choquet regularizer to be involved is defined only for distributions on $\R$.\footnote{See Section \ref{sec:7} for a discussion about how we may extend the notion of Choquet regularizer to multi-dimensions.} 

Similarly to  \cite{WZZ20a}, we  give the ``exploratory" version of the state dynamic \eqref{eq:cl_sto} when the control is randomized,  motivated by repetitive learning in RL. The control process is now randomized, leading to a distributional or exploratory control process $\Pi=\{\Pi_{t}\}_{ t \geq 0}$, where $\Pi_{t}\in \mathcal M(U)$ is the probability distribution function for control at time $t$,  
with  $\mathcal{M}(U)$  being the set of   distribution functions on $U$.  
For a given such distributional control  $\Pi$,  the exploratory version of the state dynamic is
\begin{equation}\label{eq:dyn}
\d X_{t}^{\Pi}=\tilde{b}\left(X_{t}^{\Pi}, \Pi_{t}\right) \d t+\tilde{\xi}\left(X_{t}^{\Pi}, \Pi_{t}\right) \d W_{t},~ t>0 ; \quad X_{0}^{\Pi}=x \in \mathbb{R},
\end{equation} where
 the coefficients $\tilde{b}(\cdot, \cdot)$ and $\tilde{\xi}(\cdot, \cdot)$ are defined as
\begin{equation}\label{eq:tilde_b}
\tilde{b}(y, \Pi)=\int_{U} b(y, u)  \d \Pi(u), \quad y \in \mathbb{R},~ \Pi \in \mathcal{M}(U),
\end{equation}
and
\begin{equation}\label{eq:tilde_xi}
\tilde{\xi}(y, \Pi)=\sqrt{\int_{U} \xi^{2}(y, u)   \d \Pi(u)}, \quad y \in \mathbb{R}, ~\Pi \in \mathcal{M}(U).
\end{equation}
The ``exploratory state process" $\{X_{t}^{\Pi}\}_{t\geq0}$ describes the average of
the state processes under (infinitely) many different classical control processes sampled from the exploratory control $\Pi=\{\Pi_{t}\}_{ t \geq 0}$.

Define   the exploratory reward as
\begin{equation}\label{eq:tilde_r}
\tilde{r}(y, \Pi)=\int_{U} r(y, u)  \d \Pi(u), \quad y \in \mathbb{R},~ \Pi \in \mathcal{M}(U),
\end{equation} where $r$ is the reward function.  Further, we use a Choquet regularizer  $\Phi_h$ to measure the level of exploration, and  the aim of the exploratory control is to achieve the maximum expected total
discounted and regularized exploratory  reward   represented by the optimal value function \begin{align}\label{eq:rl}
 V(x)= \sup_{\Pi \in \mathcal A(x)} \E_x \left[ \int_0^\infty e^{-\rho t} \left( \tilde{r}(X_t^{\Pi}, u)  + \lambda \Phi_h(\Pi)
\right)\d t \right],
 \end{align}
  where  $\rho>0$ is a given discount rate,  $\lambda>0$ is the {\it temperature} parameter representing the weight being put on exploration, $\mathcal{A}(x)$ is the set of  admissible distributional controls (which may in general depend on $x$), and  $\E_x$ represents the conditional expectation  given  $X^\Pi_0 = x$.

The precise definition of $\mathcal{A}(x)$ depends on the specific dynamic model under consideration and the specific problems one wants to solve, which may vary from case to case.
We will define $\mathcal{A}(x)$ precisely later for the   linear--quadratic (LQ) control case, which will be the main focus of the paper. Note that \eqref{eq:rl}  is called a regularized relaxed stochastic control problem in \cite{WZZ20a}, and    we refer to    \cite{WZZ20a} and \cite{WZ20}   for more details on the motivation of \eqref{eq:dyn}-\eqref{eq:rl}.

Controls in $\mathcal{A}(x)$ are measure (distribution function)-valued stochastic adapted processes, which are open-loop controls in the control terminology. A more important notion in RL is the feedback  (control) {\it policy}. Specifically, a deterministic mapping $\Pi(\cdot ; \cdot)$ is called a feedback policy if i) ${\Pi}(\cdot ; x)$ is a distribution function for each $x \in \mathbb{R}$; ii) the following SDE (which is the system dynamic after the feedback law ${\Pi}(\cdot ; \cdot)$ is applied)
$$
\d X_{t}=\tilde{b}\left(X_{t},  {\Pi}\left(\cdot; X_{t}\right)\right) \d t+\tilde{\xi}\left(X_{t}^{\Pi}, \Pi\left(\cdot; X_{t}\right)\right) \d W_{t},\;\; t>0 ; \quad X_{0}=x \in \mathbb{R}
$$
has a unique strong solution $\{X_{t}\}_{t \geq 0}$; and iii) the open-loop control $\Pi=\{\Pi_{t}\}_{t \geq 0} \in\mathcal{A}(x)$ where $\Pi_{t}:=\Pi\left(\cdot; X_{t}\right)$. In this case, the resulting open-loop control $\Pi$ is said to be generated from the feedback policy $\Pi(\cdot ;\cdot)$ with respect to the initial state $x$.

On the other hand, for a continuous $h\in \mathcal H$, we have $$\begin{aligned}
\Phi_{h}(\Pi) &=\int_{0}^{1} Q_{\Pi}(1-p) \d h(p)
&
=\int_{U} u h^{\prime}(1-\Pi(u))  \d \Pi(u).
\end{aligned}$$

We present the general procedure for solving the optimization problem \eqref{eq:rl},
following \cite{WZZ20a}. Applying the classical Bellman principle of optimality, we 
deduce that the optimal value function $V$ satisfies the Hamilton-Jacobi-Bellman (HJB) equation
\begin{equation}\label{eq:HJB}
\rho v(x)=\max _{\Pi \in \mathcal{M}(U)}\left(\tilde r(x, \Pi)+ \lambda\int_{U} u h^{\prime}(1-\Pi(u))
\d \Pi(u)  +\frac{1}{2} \tilde\xi^{2}(x,\Pi) v^{\prime \prime}(x)+\tilde b(x,\Pi) v^{\prime}(x)\right),
\end{equation}
or equivalently,
$$
\rho v(x)=\max _{\Pi \in \mathcal{M}(U)} \int_{U}\left(r(x, u)+\lambda u h^{\prime}(1-\Pi(u))+\frac{1}{2} \xi^{2}(x,u) v^{\prime \prime}(x)+b(x,u) v^{\prime}(x)\right) \d \Pi(u)  ,
$$
where $v$ denotes the generic unknown solution of the equation. 
The verification theorem then yields  that the following policy
\begin{equation}\label{eq:ver} \Pi^*(x):=\argmax_{\Pi \in \mathcal{M}(U)} \int_{U}\left(r(x, u)+\lambda u h^{\prime}(1-\Pi(u))+\frac{1}{2} \xi^{2}(x,u) v^{\prime \prime}(x)+b(x,u) v^{\prime}(x)\right) \d \Pi(u) \end{equation}
is an optimal policy if it generates an admissible open-loop control for any $x$.

When the regularizer is the entropy,
 \cite{WZZ20a} solve the associated HJB equation explicitly to get the Gibbs (or Boltzmann) measure as the optimal sampler for exploration, which specializes to Gaussian in the LQ case. Solving \eqref{eq:HJB} and/or
 \eqref{eq:ver} generally for Choquet regularizers  remain a (significant)  open question, and in this paper we focus on the LQ setting to
study how different regularizers may generate  the optimal policy distributions.
Specifically, we consider
\begin{equation}\label{eq:abcd}
b(x, u)=A x+B u \quad \text { and } \quad \xi(x, u)=C x+D u,~~ x, u \in \mathbb{R},
\end{equation}
where $A, B, C, D \in \mathbb{R}$, and
\begin{equation}\label{eq:r}
r(x, u)=-\left(\frac{M}{2} x^{2}+R x u+\frac{N}{2} u^{2}+P x+L u\right),~~ x, u \in \mathbb{R},
\end{equation}
where $M \geq 0$, $N>0$,  and $R, P, L \in \mathbb{R}$. Moreover, as in standard LQ theory we assume henceforth that $U=\mathbb{R}$ and thus write $\mathcal{M}=\mathcal{M}(U)$ and $\mathcal{M}^2=\mathcal{M}^2(U)$.

Fix an initial state $x\in\R$. For each open-loop control $\Pi\in\mathcal{A}(x)$, denote its mean and variance processes $\{\mu_t\}_{t\geq 0}$ and  $\{\sigma^2_t\}_{t\geq 0}$  by
  $$\mu_t\equiv \mu(\Pi_t)=\int_U u\d \Pi_t( u) ~
  ~~~~~\text{and}~~~~~~\sigma^2_t\equiv \sigma^2(\Pi_t)={\int_U u^2\d \Pi_t(u) -\mu_t^2}.$$
By \eqref{eq:tilde_b} and \eqref{eq:tilde_xi}, we have \begin{equation}\begin{aligned}\label{eq:3}
 \tilde{b}(x, \Pi)= Ax+B\mu(\Pi) ~~~~~\text{and}~~~~~~\tilde{\xi}(x, \Pi)=\sqrt{C^2 x^2+2CDx\mu(\Pi)+D^2[\mu^2(\Pi)+\sigma^2(\Pi)]}.\end{aligned}\end{equation}
Thus,  the state dynamic   $X^{\Pi}$  in \eqref{eq:dyn} is given by  \begin{equation} \label{eq:X_t}\d X_{t}^{\Pi}=(A X_t^{\Pi}+B\mu_t) \d t+\sqrt{(CX_t^{\Pi}+D\mu_t)^2+D^2\sigma_t^2} \, \d W_{t},\quad X_{0}^{\Pi}=x \in \mathbb{R},
 \end{equation} which implies that the state process  only depends on  the mean process $\{\mu_t\}_{t\geq 0}$ and the variance process $\{\sigma^2_t\}_{t\geq 0}$ of a given distributional control $\{\Pi_t\}_{t\geq 0}$.
Let   $\mathcal{B}$ be the Borel algebra on $\R$. A  control process  $\Pi$ is said to be admissible, denoted by $\Pi\in\mathcal{A}(x)$, if
(i) for each $t \geq 0,$ $\Pi_{t} \in \mathcal{M}$ a.s.;
(ii)  for each $A \in \mathcal{B}$,  $\left\{ \Pi_{t}(A), t \geq 0\right\}$ is $\mathcal{F}_{t}$-progressively measurable;
(iii) for each $t \geq 0, \mathbb{E}[\int_{0}^{t}(\mu_{s}^{2}+\sigma_{s}^{2}) \d s]<\infty$;
(iv) with $\{X_{t}^{\Pi}\}_{t \geq 0}$ solving \eqref{eq:dyn}, $\liminf _{T \rightarrow \infty} e^{-\rho T} \mathbb{E}[(X_{T}^{\Pi})^{2}]=0$;
(v) with $\{X_{t}^{\Pi}\}_{t \geq 0}$  solving \eqref{eq:dyn}, $\mathbb{E}[\int_{0}^{\infty} e^{-\rho t}|\tilde{r}(X_{t}^{\Pi}, \Pi_t)  + \lambda \Phi_h(\Pi_t)| \d t]<\infty$.

In the above, condition (iii) is to ensure that for any $\Pi \in \mathcal{A}(x)$, both the drift and volatility terms of \eqref{eq:dyn} satisfy a global Lipschitz condition and a linear growth condition in the state variable and, hence, the SDE \eqref{eq:dyn} admits a unique strong solution $X^{\Pi}$. Condition (iv) is used to ensure that dynamic programming and verification theorem are applicable, as will be evident in the sequel. Finally, the reward is finite under condition (v).

By \eqref{eq:tilde_r} and  \eqref{eq:r}, we have
\begin{equation}\begin{aligned}\label{eq:2}\tilde r(x,\Pi)=-\frac{M}{2}x^2-Rx \mu(\Pi)-\frac{N}{2} [\mu^2(\Pi)+\sigma^2(\Pi)]-Px-L\mu(\Pi).\end{aligned}\end{equation}
Thus,   plugging \eqref{eq:3} and \eqref{eq:2}  back into \eqref{eq:HJB}, we can derive  the HJB equation for  LQ control as
\begin{equation}\begin{aligned}
\rho v(x)=& \max _{\Pi \in \mathcal{M}^2}\Big\{-Rx\mu(\Pi)-\frac{N}{2}\left[\mu^{2}(\Pi)+\sigma^{2}(\Pi)\right]- L\mu(\Pi)+\lambda \Phi_{h}(\Pi)+ C D x \mu(\Pi) v^{\prime \prime}(x)\\&+\frac{1}{2} D^{2}\left[\mu^{2}(\Pi)+\sigma^{2}(\Pi)\right] v^{\prime \prime}(x)+B \mu(\Pi) v^{\prime}(x)\Big\}+Axv'(x)-\frac{M}{2}x^2-Px+\frac{1}{2}C^2x^2v''(x).
\end{aligned}\label{eq:HJB3}\end{equation}

To analyze and solve this equation, we need to study the maximization problem therein. Denote by $\varphi(x,\Pi)$ the term inside the max operator above. Observe that $\varphi(x,\Pi)$ depends on $\Pi$ via only its mean $\mu(\Pi)$ and variance $\sigma^{2}(\Pi)$, except for the term  $\Phi_{h}(\Pi)$, which motivates us to write
\begin{align}\label{eq:double} \max_{\Pi \in \mathcal{M}^2}\;\varphi(x,\Pi)=\max_{m\in \R, s>0}\;\;\;\max_{\Pi \in \mathcal{M}^2, \mu (\Pi) =m, \sigma^2 (\Pi) = s^2}\;\varphi(x,\Pi).
\end{align}
The inner maximization problem is in turn equivalent to
\begin{align}\label{eq:opt}
\max_{\Pi \in \mathcal M^2}\Phi_h (\Pi) \mbox{~~~~~subject to~} \mu (\Pi) =m ~\mbox{and}~ \sigma^2 (\Pi) = s^2.
\end{align}

This is a {\it static} optimization problem, which holds the key to solve the HJB equation \eqref{eq:HJB3} and thus to our exploratory problem with Choquet regularizers. It is interesting to note that when the regularizer is the entropy,  the optimal solution to the above problem is Gaussian, which is indeed the essential reason behind the Gaussian exploration derived in \cite{WZZ20a}. More specifically,
for LQ control any regularized payoff function depends only on the mean and variance processes of the distributional control, and the Gaussian distribution maximizes the entropy  when the mean and variance are fixed. The natural question in our setting is what distribution with given mean and variance maximizes a Choque regularizer, which is exactly the problem (\ref{eq:opt}).
The next section is devoted to solving explicitly this maximization problem (\ref{eq:opt}) of ``mean--variance constrained Choquet regularizers"  with a variety of specific Choquet regularizers.

 \section{Maximizing mean--variance constrained Choquet regularizers}\label{sec:4}

\subsection{General results}


For given $h\in \mathcal H$, $m\in \R$ and $s>0$, we consider the problem (\ref{eq:opt}),
which has been motivated by the exploratory control for RL as discussed in the previous section. 
Note that since $\Phi_h$ is location-invariant and scalable, \eqref{eq:opt} is equivalent to the following problem
\begin{align*}
\label{eq:opt2}
  s\max_{\Pi \in \mathcal M^2}\Phi_h (\Pi) \mbox{~~~~~subject to~} \mu (\Pi) = 0  ~\mbox{and}~ \sigma^2 (\Pi) = 1.
\end{align*}
In what follows, $h'$ represents the right-derivative of $h$, which exists on $[0,1)$ since $h$ is concave on $[0,1]$. 

It turns out  that a general solution to \eqref{eq:opt} has been given by Theorem 3.1 of \cite{LCLW20}. 
\begin{lemma}\label{lem:liu}
If $h$ is continuous and not constantly zero, then
a maximizer $\Pi^*$ to \eqref{eq:opt} has the following quantile function
\begin{equation}\label{eq:lemma3}
Q_{\Pi^*}(p) =  m + s\frac{ h'(1-p) }{ ||h'||_2}, ~~ \mbox{~a.e. }p\in (0,1),
\end{equation}
and the  maximum value of \eqref{eq:opt} is $\Phi_h(\Pi^*)= s||h'||_2$.
\end{lemma}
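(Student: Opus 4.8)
The plan is to turn \eqref{eq:opt} into a one-line Cauchy--Schwarz estimate, the point being that the monotonicity constraint hidden in ``$\Pi$ is a distribution'' does not bind. Since $h$ is continuous it is left-continuous, so the quantile representation Lemma \ref{lem:qr}(ii) gives $\Phi_h(\Pi)=\int_0^1 Q_\Pi(1-p)\,\d h(p)$; and a continuous concave $h$ on $[0,1]$ with $h(0)=h(1)=0$ is absolutely continuous there, so $\d h(p)=h'(p)\,\d p$. Substituting $u=1-p$,
$$\Phi_h(\Pi)=\int_0^1 Q_\Pi(1-p)\,\d h(p)=\int_0^1 Q_\Pi(u)\,g(u)\,\d u,\qquad g(u):=h'(1-u).$$
Here $g$ is non-decreasing because $h$ concave makes $h'$ non-increasing; moreover $\int_0^1 g(u)\,\d u=\int_0^1 h'(p)\,\d p=h(1)-h(0)=0$ and $||g||_2=||h'||_2\in(0,\infty)$ (positivity since $h\not\equiv 0$, finiteness being the standing assumption implicit in asserting that a maximizer exists).

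Now fix any $\Pi\in\mathcal M^2$ with $\mu(\Pi)=m$ and $\sigma^2(\Pi)=s^2$. Subtracting $m\int_0^1 g=0$ and applying Cauchy--Schwarz,
$$\Phi_h(\Pi)=\int_0^1\left(Q_\Pi(u)-m\right)g(u)\,\d u\le\left(\int_0^1\left(Q_\Pi(u)-m\right)^2\d u\right)^{1/2}||h'||_2=s\,||h'||_2,$$
since $\int_0^1\left(Q_\Pi(u)-m\right)^2\d u=\sigma^2(\Pi)=s^2$. Equality holds precisely when $Q_\Pi-m$ is a non-negative multiple of $g$, and the variance constraint then forces that multiple to equal $s/||h'||_2$, i.e.\ $Q_\Pi(p)=m+s\,h'(1-p)/||h'||_2$. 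Because $g$ is non-decreasing this function is non-decreasing, hence is the left-quantile function of a genuine law $\Pi^*\in\mathcal M^2$ with $\mu(\Pi^*)=m$ and $\sigma^2(\Pi^*)=s^2$ (via the standard bijection between elements of $\mathcal M^2$ and non-decreasing left-continuous functions in $L^2(0,1)$). Thus the bound $s\,||h'||_2$ is attained, and \eqref{eq:lemma3} together with $\Phi_h(\Pi^*)=s\,||h'||_2$ follow.

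The only genuinely delicate step is recognizing that the monotonicity of $Q_\Pi$ is a slack constraint: it is exactly the concavity of $h$ that makes the Cauchy--Schwarz extremizer $m+s\,g/||g||_2$ automatically non-decreasing, so no rearrangement or ``bathtub'' argument is needed and the relaxed and constrained problems share the same optimizer. The remaining points are routine and already folded in above: the absolute continuity of a continuous concave $h$ (legitimizing $\d h=h'\,\d p$), the quantile bijection, and the implicit finiteness $||h'||_2<\infty$ --- if $||h'||_2=\infty$ (e.g.\ $h(p)=p^{1/3}-p$) the supremum in \eqref{eq:opt} is $+\infty$ and is not attained, consistently with the representation $\sigma^2(\Pi)=\sup_{h\in\mathcal H}\{\Phi_h(\Pi)-\frac14||h'||_2^2\}$ recalled earlier. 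This argument is a streamlined account of Theorem 3.1 of \cite{LCLW20}.
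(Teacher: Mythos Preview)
Your argument is correct. The paper itself offers no proof of this lemma: it simply cites Theorem 3.1 of \cite{LCLW20} and moves on. You have supplied a self-contained argument via Cauchy--Schwarz in $L^2(0,1)$, with the key observation that concavity of $h$ makes the unconstrained extremizer $m+s\,h'(1-\cdot)/||h'||_2$ automatically non-decreasing and hence a bona fide quantile function, so the monotonicity constraint never binds. This is exactly the mechanism underlying the cited result, and your write-up is arguably cleaner for this special case than invoking the external reference. The side remarks on absolute continuity of a continuous concave $h$, on the implicit hypothesis $||h'||_2<\infty$, and on what happens when it fails are all accurate and worth keeping.
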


In the context of RL, an interesting question arises: Given a distribution used for exploration, what is the regularizer that leads to that distribution? This is a practically important question that can provide interpretability to some widely used samplers for exploration in practice. Theoretically, answering this question is in some sense a converse of Lemma \ref{lem:liu} at least in the LQ setting.

In what follows,   we denote by $\mathcal{M}^2(m,s^2)$    the set of $\Pi \in\mathcal M^2$  satisfying  $\mu(\Pi)=m\in\R$ and $\sigma^2(\Pi)=s^2>0$. Also, recall that given a distribution $\Pi$ the  {\it location-scale family} of   $\Pi$  is the set of all distributions $\Pi_{a,b}$ parameterized by  $a\in\R$ and $b>0$ such that  $\Pi_{a,b}(x)=\Pi((x-a)/b)$  for all $ x \in \mathbb{R}$.
\begin{proposition}\label{prop:opt}
Let  $\Pi\in  \mathcal{M}^2(m,s^2)$ be given, where $m\in \R$ and $s>0$.
Then $\Pi$ maximizes  $\Phi_h$ as well as $\Phi_{\lambda h}$ for any $\lambda>0$ over $\mathcal{M}^2(m,s^2)$
for a continuous $h\in \mathcal H$ specified by
 \begin{equation}\label{eq:opt_h}h'(p)  =  Q_{\Pi}(1-p)-m  ,~~ \mbox{~a.e. }p\in (0,1).\end{equation}
Moreover, any $\hat\Pi$ in the location-scale family of $\Pi$
also maximizes $\Phi_h$ over $ \mathcal M^2(\mu(\hat\Pi),\sigma^2(\hat\Pi))$.

\end{proposition}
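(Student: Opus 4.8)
The plan is to exhibit the distortion $h$ explicitly and then read the conclusion off Lemma \ref{lem:liu}, of which this proposition is essentially a converse. First I would set
$h(p) := \int_0^p \big(Q_\Pi(1-t)-m\big)\,\d t$ for $p\in[0,1]$.
Since $\Pi\in\mathcal M^2$, its left-quantile function $Q_\Pi$ belongs to $L^2(0,1)$, so the integrand lies in $L^2(0,1)\subseteq L^1(0,1)$, and $h$ is absolutely continuous, hence continuous. Because $Q_\Pi$ is non-decreasing, $t\mapsto Q_\Pi(1-t)$ is non-increasing, so $h$ is the primitive of a non-increasing function and is therefore concave, with right-derivative $h'(p)=Q_\Pi(1-p)-m$ for a.e.\ $p\in[0,1)$; this is precisely \eqref{eq:opt_h}. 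We have $h(0)=0$ trivially and $h(1)=\int_0^1 Q_\Pi(1-t)\,\d t - m = \mu(\Pi)-m = 0$, so $h\in\mathcal H$. Finally, $\sigma^2(\Pi)=s^2>0$ means $\Pi$ is non-degenerate, so $Q_\Pi$ is not a.e.\ constant, $h'$ is not a.e.\ zero, and $h$ is not constantly zero; indeed $\|h'\|_2^2=\int_0^1\big(Q_\Pi(1-t)-m\big)^2\,\d t=\sigma^2(\Pi)=s^2$, i.e.\ $\|h'\|_2=s$.

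With this $h$, Lemma \ref{lem:liu} applies and states that any maximizer $\Pi^*$ of \eqref{eq:opt} over $\mathcal M^2(m,s^2)$ has quantile function $Q_{\Pi^*}(p)=m+s\,h'(1-p)/\|h'\|_2$. Substituting $h'(1-p)=Q_\Pi(p)-m$ and $\|h'\|_2=s$ gives $Q_{\Pi^*}(p)=m+s\cdot(Q_\Pi(p)-m)/s=Q_\Pi(p)$ a.e.; since left-quantile functions are left-continuous, a.e.\ equality forces equality everywhere, and a Borel probability measure on $\R$ is determined by its left-quantile function, so $\Pi^*=\Pi$. Hence $\Pi$ itself maximizes $\Phi_h$ over $\mathcal M^2(m,s^2)$. (Equivalently, one can bypass the identity of maximizers by computing, via the quantile representation Lemma \ref{lem:qr}(ii) and $\d h(p)=h'(p)\,\d p$, that $\Phi_h(\Pi)=\int_0^1 Q_\Pi(1-p)\,h'(p)\,\d p = \int_0^1 Q_\Pi(1-p)^2\,\d p - m\int_0^1 Q_\Pi(1-p)\,\d p = (s^2+m^2)-m^2 = s^2 = s\|h'\|_2$, which is exactly the optimal value in Lemma \ref{lem:liu}.) The statement for $\Phi_{\lambda h}$ is then immediate: $\lambda h\in\mathcal H$ for every $\lambda>0$ and, directly from \eqref{eq:def2}, $\Phi_{\lambda h}=\lambda\Phi_h$, so $\Phi_{\lambda h}$ and $\Phi_h$ have the same maximizers over any fixed subset of $\mathcal M$.

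For the location-scale part, fix $a\in\R$, $b>0$ and let $\hat\Pi=\Pi_{a,b}$, the distribution of $a+bX$ with $X\sim\Pi$; then $\mu(\hat\Pi)=a+bm=:\hat m$ and $\sigma^2(\hat\Pi)=b^2s^2$. Applying Lemma \ref{lem:liu} over $\mathcal M^2(\hat m, b^2s^2)$, a maximizer has quantile $\hat m + bs\,h'(1-p)/\|h'\|_2 = (a+bm)+bs(Q_\Pi(p)-m)/s = a+bQ_\Pi(p)=Q_{\hat\Pi}(p)$, so $\hat\Pi$ maximizes $\Phi_h$ over $\mathcal M^2(\mu(\hat\Pi),\sigma^2(\hat\Pi))$. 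Alternatively this follows from location invariance and scale homogeneity of $\Phi_h$ (Lemma \ref{lem:propety}): the map sending $\Pi'$ to the distribution of $(X'-a)/b$ is a bijection of $\mathcal M^2(\hat m,b^2s^2)$ onto $\mathcal M^2(m,s^2)$ that rescales $\Phi_h$ by the positive constant $b$, hence carries the maximizer $\Pi$ of the latter to the maximizer $\hat\Pi$ of the former.

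I expect the only mildly delicate points to be the measure-theoretic bookkeeping with quantile functions: that the primitive $h$ has right-derivative $Q_\Pi(1-p)-m$, that non-degeneracy of $\Pi$ is exactly what guarantees $h\not\equiv 0$ so that Lemma \ref{lem:liu} is applicable, and that a.e.\ equality of left-quantile functions forces equality of distributions. None of these presents a genuine difficulty; the proposition is, at bottom, a direct inversion of Lemma \ref{lem:liu}.
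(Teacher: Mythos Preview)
Your proof is correct and follows essentially the same approach as the paper: both invert Lemma \ref{lem:liu} by choosing $h'(p)=Q_\Pi(1-p)-m$, then read off the maximizer and handle the location--scale statement via the same lemma (or, equivalently, via location invariance and scale homogeneity). If anything, you are more careful than the paper in explicitly verifying $h\in\mathcal H$, that $h$ is not identically zero, and that $\|h'\|_2=s$.
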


\begin{proof} By Lemma \ref{lem:liu}, given a continuous $ h\in\mathcal H$, we have    $$
h'(p)  = \frac{|| h'||_2}{s} (Q_{\Pi}(1-p)-m),~~ \mbox{a.e.} ~ p\in (0,1),$$ where $\Pi$ maximizes  $\Phi_h$ over $\mathcal{M}^2(m,s^2)$.   Since $\Phi_{\lambda h }(\Pi)=\lambda \Phi_h (\Pi)$ for any $\lambda>0$, $\Pi$ that maximizes   $\Phi_{ h }$  also maximizes  $\Phi_{\lambda h }$, which means that a positive constant multiplier in $\Phi_h$ does not affect  problem \eqref{eq:opt}. Hence, $\Pi$  maximizes $\Phi_{ h }$ over $\mathcal{M}^2(m,s^2)$ with $   h'(p)  =  Q_{\Pi}(1-p)-m$ for $p\in(0,1)$ a.e. Moreover, if $\hat\Pi$ is in the location-scale family of $\Pi$,  then we have $\hat\Pi(x)=\Pi( (x-a)/b)$ for some $a\in\R$ and $b>0$ for all $x\in\R$, which implies that $$  h'(p)=  Q_{\Pi}(1-p)-m= ( Q_{\hat\Pi}(1-p)-a)/b-m~~ \mbox{for}~ p\in(0,1) ~a.e.$$ Since $\mu(\hat\Pi)=a+b m$,  it follows that $\hat\Pi$ maximizes  $\Phi_h$ over $ \mathcal M^2(\mu(\hat\Pi),\sigma^2(\hat\Pi)) $.
\end{proof}
A simple but important implication from Proposition \ref{prop:opt} is that  \emph{every non-degenerate distribution with finite first and second moments is the optimizer of some $\Phi_h$ in \eqref{eq:opt} over $\mathcal M^2(m,s^2)$ for some $m\in\R$ and $s>0$}. Therefore, any distribution used for static exploration can be interpreted  by certain suitable Choquet regularizer $\Phi_h$.  Moreover, there is a common distortion function $h$,  which is explicitly specified by Proposition \ref{prop:opt},   for any given  location-scale family,  in the sense that  any distribution function $\Pi$ belonging to this  location-scale family   maximizes $\Phi_h$ over $\mathcal{M}^2(\mu(\Pi),\sigma^2(\Pi))$.
In other words,  a single $\Phi_h$ can serve as 
 the same regularizer for a whole location-scale family of distributions.

\begin{remark}We may also consider optimization of a general functional $I_h$ in which $h$  is not necessarily  concave, such as an inverse S-shaped distortion function.\footnote{A function is called inverse S-shaped (S-shaped) if there exists a point $p^*\in(0,1)$  such that $h$ is concave (convex) on $[0,p^*)$ and is convex (concave) on $(p^*,1]$. Inverse S-shaped distortion functions are common in behavioral decision theory; see e.g. \cite{TK92}.
} This problem is solvable in the setting of \eqref{eq:opt} by replacing $h$ with its concave envelope; see  \cite{PWW20}.   However,
a non-concave $h$ implies that $I_h$ does not satisfy the $\lcx$-consistency (see Theorem \ref{thm:charac}).
This is not desirable  for an exploration regularizer, and hence we will not pursue it in this paper. \end{remark}

In the following subsections,  we  present specific examples applying the above general results, involving several samplers commonly used in RL for exploration, as well as measures commonly used in  finance and operations research  for    evaluating  distribution variability. 


\subsection{Some  common exploratory distributions}
We first present some  examples with   simple distributions  which have been widely used for exploration in the RL literature.
\begin{example}[Bang--bang exploration]\label{ex:bb}
Let  $\Pi$ be a Bernoulli distribution with $\Pi(\{0\})= 1-\epsilon \in (0,1)$ and $\Pi(\{1\})= \epsilon$. In this case, the RL agent explores only two states 0 and 1, which is called a bang--bang exploration.
In particular, in the classical two-armed bandit problem,   0 is the currently more promising arm and 1 is the other arm.
  Proposition \ref{prop:opt} gives $$h' (p) = {\id_{\{p< \epsilon\}} - \epsilon},~~\mbox{~a.e.}~ p\in (0,1),$$
and thus   $h(p)=p\wedge \epsilon - \epsilon p$.  The corresponding regularizer  $\Phi_h$ is given by, using the quantile representation in Lemma \ref{lem:qr},
 $$\begin{aligned}
\Phi_h (\Pi)  &=
 \int_0^{\epsilon}  Q_\Pi(1-p) \d p  - \epsilon  \int_0^1 Q_\Pi(1-p)  \d p= \epsilon (\mu_\epsilon (\Pi)   -  \mu(\Pi)),
 \end{aligned}
$$
where $\mu_\epsilon (\Pi) $   is the $\epsilon$-tail mean defined by
$$
\mu_\epsilon (\Pi):  = \frac 1 \epsilon \int_0^{\epsilon}  Q_\Pi(1-p) \d p .
$$
Since a constant multiplier in $\Phi_h$ does not affect  problem \eqref{eq:opt}, 
  a Bernoulli distribution with parameter $\epsilon$
maximizes
$\Phi _h= \mu_\epsilon - \mu. $
Note that the tail mean corresponds to   ES in risk management with an axiomatic foundation laid out in \cite{WZ21}.
The difference between an ES and the mean,  $ \mu_\epsilon - \mu$,
is a primary example of the  generalized deviation measures in \citet[Example 3]{RUZ06},
which has an axiomatic characterization similar to ES.

\begin{example}[$\epsilon$-greedy exploration]\label{ex:bb'}
Let  $\Pi$ be a  discrete  distribution with $\Pi(\{0\})= 1-\epsilon \in (0,1)$ and $\Pi(\{j\})={\epsilon}/{(2n)}$ for $j\in\{-n,\dots,-1,1,\dots,n\}$. In this case, the RL agent explores  $2n+1$ states  where  $0$ is the currently most ``exploitative" state and $\{-n,\dots, -1, 1, \dots , n\}$ represent the other states surrounding $0$.
 From Proposition \ref{prop:opt}, we have
\begin{equation}\label{eq:dis_unif}
 h' (p) = \sum_{i=1}^n (n-i+1)\id_{\left\{\frac{ (i-1)\epsilon}{2n}\leq p< \frac{ i\epsilon}{2n}\right\}} -\sum_{i=n+1}^{2n} (i-n)\id_{\left\{\frac{ (i-1)\epsilon}{2n}+1-\epsilon\leq p< \frac{ i\epsilon}{2n}+1-\epsilon\right\}},~\mbox{a.e.} ~p\in (0,1);\end{equation}
and thus $h$ is a piece-wise linear function.
 An example of $h$ in \eqref{eq:dis_unif} is plotted in Figure \ref{fig:1}.
 Using the quantile representation in Lemma \ref{lem:qr}, the corresponding regularizer  $\Phi_h$ is given by
\begin{align*}
\Phi_h (\Pi) =\epsilon \left(\sum_{i=1}^n \mu_\epsilon^+ (i, \Pi)-\sum_{i=n+1}^{2n} \mu_\epsilon^- (i, \Pi)\right),
\end{align*}
where $\mu_\epsilon^+ (i,\Pi) $ and   $\mu_\epsilon^- (i,\Pi) $  are  defined by
\begin{equation}\label{eq:mu^+}
\mu_\epsilon^+ (i, \Pi):  = \frac {n-i+1} {\epsilon} \int_{\frac{(i-1)\epsilon}{2n}}^{\frac{ i\epsilon}{2n}}  Q_\Pi(1-p) \d p ~~~~\mbox{for}~~~ i=1,\dots,n,
\end{equation}
and
\begin{equation}\label{eq:mu^-}
\mu_\epsilon^- (i, \Pi):  = \frac {i-n} {\epsilon} \int_{\frac{ (i-1)\epsilon}{2n}+(1-\epsilon)}^{\frac{ i\epsilon}{2n}+(1-\epsilon)}  Q_\Pi(1-p) \d p ~~~~\mbox{for}~~~ i=n+1,\dots,2n.
\end{equation}
\end{example}

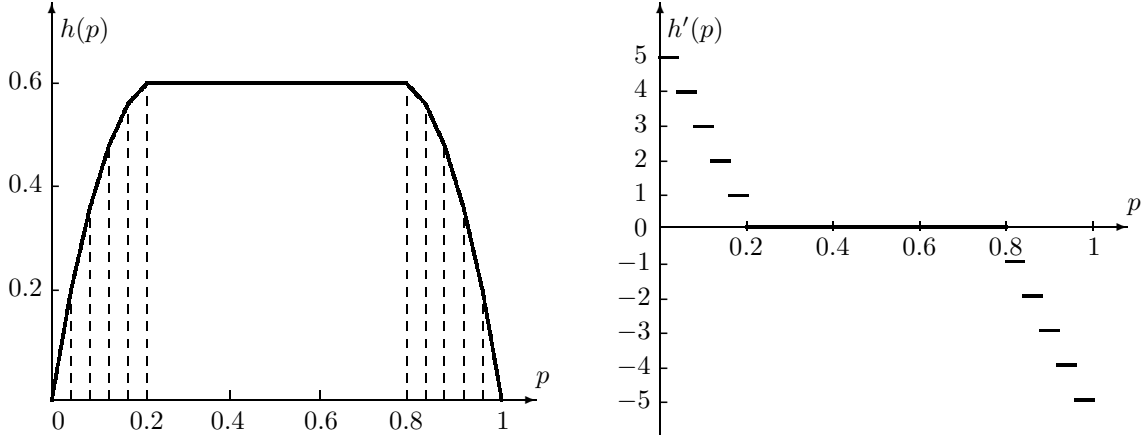
\begin{figure}[t]
{  \setlength{\unitlength}{2.3cm}
\begin{center}
\small
\begin{picture}(2.6,2.4)(0,0)
\put(0,0){\vector(0,1){2.3}}
\put(0,0){\vector(1,0){2.8}}
\put(0,-0.17){$ 0$}

\put(-0.25,1.8){$ 0.6$}
\put(0,1.8) {\small -}
\put(-0.25,1.2){$ 0.4$}
\put(0,1.2) {\small -}
\put(-0.25,0.6){$ 0.2$}
\put(0,0,6) {\small -}
\put(2.8,0.1){$p$}
\put(0.05,2.1){$ h(p)$}
\put(0.45,-0.17){$ 0.2$}
\put(0.92,-0.17){$ 0.4$}
\put(1.46,-0.17){$ 0.6$}
\put(1.95,-0.17){$ 0.8$}
\put(2.57,-0.17){$ 1$}

\linethickness{0.3mm}
\qbezier(0,0)(0.055,0.315)(0.11,0.63)
\qbezier(0.11,0.63)(0.165,0.87)(0.22,1.11)
\qbezier(0.22,1.11)(0.275,1.29)(0.33,1.47)
\qbezier(0.33,1.47)(0.385,1.59)(0.44,1.71)
\qbezier(0.44,1.71)(0.495,1.77)(0.55,1.83)

\linethickness{0.1mm}
\multiput(0.55,0)(0.0,0.1){18}{\line(0,1){0.05}}
\multiput(0.44,0)(0.0,0.1){17}{\line(0,1){0.05}}
\multiput(0.33,0)(0.0,0.1){15}{\line(0,1){0.05}}
\multiput(0.22,0)(0.0,0.1){11}{\line(0,1){0.05}}
\multiput(0.11,0)(0.0,0.1){6}{\line(0,1){0.05}}
\linethickness{0.3mm}
\qbezier(0.55,1.83)(1.3,1.83)(2.05,1.83)
\linethickness{0.3mm}
\qbezier(2.05,1.83)(2.105,1.77)(2.16,1.71)
\qbezier(2.16,1.71)(2.215,1.59)(2.27,1.47)
\qbezier(2.27,1.47)(2.325,1.29)(2.38,1.11)
\qbezier(2.38,1.11)(2.435,0.87)(2.49,0.63)
\qbezier(2.49,0.63)(2.5455,0.315)(2.6,0)
\linethickness{0.1mm}
\multiput(2.05,0)(0.0,0.1){18}{\line(0,1){0.05}}
\multiput(2.16,0)(0.0,0.1){17}{\line(0,1){0.05}}
\multiput(2.27,0)(0.0,0.1){15}{\line(0,1){0.05}}
\multiput(2.38,0)(0.0,0.1){11}{\line(0,1){0.05}}
\multiput(2.49,0)(0.0,0.1){6}{\line(0,1){0.05}}

\multiput(1.55,0)(0.0,0.1){1}{\line(0,1){0.05}}
\multiput(1.03,0)(0.0,0.1){1}{\line(0,1){0.05}}
\end{picture}\quad \quad \quad \quad\quad\quad
\begin{picture}(2.6,2.4)(0,0)
\put(0,-0.2){\vector(0,1){2.5}}
\put(0,1){\vector(1,0){2.7}}
\put(-0.15,0.95){$ 0$}
\put(-0.15,1.95){$ 5$}
\put(0,1.95) {\small -}
\put(-0.15,1.75){$ 4$}
\put(0,1.75) {\small -}
\put(-0.15,1.55){$ 3$}
\put(0,1.55) {\small -}
\put(-0.15,1.35){$ 2$}
\put(0,1.35) {\small -}
\put(-0.15,1.15){$ 1$}
\put(0,1.15) {\small -}

\put(2.7,1.1){$ p$}
\put(0.04,2.1){$ h'(p)$}
\put(-0.25,0.75){$ -1$}
\put(0,0.75) {\small -}
\put(-0.25,0.55){$ -2$}
\put(0,0.55) {\small -}
\put(-0.25,0.35){$ -3$}
\put(0,0.35) {\small -}
\put(-0.25,0.15){$ -4$}
\put(0,0.15) {\small -}
\put(-0.25,-0.05){$ -5$}
\put(0,-0.05) {\small -}

\linethickness{0.1mm}
\multiput(2.5,0.98)(0.0,0.1){1}{\line(0,1){0.05}}
\multiput(2,0.98)(0.0,0.1){1}{\line(0,1){0.05}}
\multiput(1.5,0.98)(0.0,0.1){1}{\line(0,1){0.05}}
\multiput(1,0.98)(0.0,0.1){1}{\line(0,1){0.05}}
\multiput(0.5,0.98)(0.0,0.1){1}{\line(0,1){0.05}}
\put(0.4,0.84){$ 0.2$}
\put(0.9,0.84){$ 0.4$}
\put(1.4,0.84){$ 0.6$}
\put(1.9,0.84){$ 0.8$}
\put(2.47,0.84){$ 1$}

\linethickness{0.3mm}
\qbezier(0,1.98)(0,1.98)(0.1,1.98)
\qbezier(0.1,1.78)(0.2,1.78)(0.2,1.78)
\qbezier(0.2,1.58)(0.3,1.58)(0.3,1.58)
\qbezier(0.3,1.38)(0.4,1.38)(0.4,1.38)
\qbezier(0.4,1.18)(0.4,1.18)(0.5,1.18)
\qbezier(0.5,1)(0.6,1)(2,1)
\qbezier(2,0.8)(2,0.8)(2.1,0.8)
\qbezier(2.1,0.6)(2.1,0.6)(2.2,0.6)
\qbezier(2.2,0.4)(2.2,0.4)(2.3,0.4)
\qbezier(2.3,0.2)(2.4,0.2)(2.4,0.2)
\qbezier(2.4,0)(2.4,0)(2.5,0)
\end{picture}
\end{center}}

\caption{The plots of $h$ (left panel) and $h'$ (right panel) in Example \ref{ex:bb'} corresponding to a discrete distribution $\Pi$  where $n=5$ and $\epsilon=0.4$.}\label{fig:1}
\end{figure}

This example is related  to the $\epsilon$-greedy strategy in multi-armed bandit problem, where $\epsilon$ signifies  the probability of exploring. To be specific,   the $\epsilon$-greedy exploration is to select the current best arm with probability $1-\epsilon$, and the other $2n$ arms uniformly with probability $\epsilon/(2n)$. It is worth noting that  ES is also   used as a  criterion  in the  multi-armed bandit  problem with exploration; see  \cite{CZT20} and \cite{BG21}. \end{example}

\begin{example}[Exponential exploration] Let 
$\Pi$ be an exponential distribution with mean $1$. It follows from Proposition \ref{prop:opt} that
$$h' (p) =-\log(p)-1,~~\mbox{~a.e.}~p\in (0,1),$$ and thus $h(p)=-p\log(p)$.  The corresponding Choquet regularizer  $\Phi_h$ is given by
$$
\Phi_h (\Pi) =-
\int_0^1 Q_\Pi(1-p) (\log(p)+1) \d p=:\rm CRE(\Pi),\;\;\Pi \in \mathcal M,
$$
where $${\rm CRE}(\Pi):=-
\int_0^\infty  \Pi([x,\infty)) \log(\Pi([x,\infty)))\d x,$$
which is called the {\it cumulative residual entropy} (CRE) and studied by \cite{RCVW04} and \cite{HC20}. \cite{TSN17} argue that CRE can be viewed as a measure of dispersion or variability. Thus, the exponential exploration can be interpreted by   the CRE regularizer.
\label{exm:exp}\end{example}
\begin{remark}To provide some parametric flexibility for CRE,
\cite{PN13} introduce the  {\it generalized} cumulative residual entropy (GCRE) given by
$$\operatorname{GCRE}_{n}(\Pi):=\frac{1}{n !} \int_{0}^{\infty} \Pi([x,\infty))\left(-\log(\Pi([x,\infty)))\right)^{n} \mathrm{~d} x, \quad n \in \mathbb{N}_{+}.$$
Clearly, $\operatorname{GCRE}_{1}(\Pi)=\operatorname{CRE}(\Pi)$. From Lemma \ref{lem:qr},    it follows
$$\operatorname{GCRE}_{n}(\Pi)=\int_{0}^{1} Q_\Pi(1-p) \mathrm{d} h_{1, n}(p),$$
where
$$h_{1, n}(p)=\frac{1}{n !}p(-\log(p))^{n}, ~~\mbox{~a.e.}~ p\in(0,1).$$
It is easy to see that $h_{1, n}(0)=h_{1, n}(1)=0$, and
$h_{1, n}^{\prime \prime}(p)=(-\log(p))^{n-2}(\log(p)+n-1)/(p(n-1)!)$
which is negative if and only if $ 0< p<e^{1-n}$. This means that $h_{1, n}$ is not concave on $(0,1)$ for $n>1$. Therefore, $\operatorname{GCRE}_{n}$ is {\it not} a Choquet regularizer  for $n>1$.
\end{remark}


\begin{example}[Gaussian exploration]
\label{ex:gaussian}
If $\Pi$ is a Gaussian distribution,
then  Proposition \ref{prop:opt} gives
$$h' (p) = z (1-p),~~\mbox{~a.e.}~p\in (0,1),$$
where 
$z$ is the quantile function of a standard normal distribution.\footnote{In statistics, the quantile of a standard normal distribution corresponding to a test statistic is often referred to as a z-score -- hence the notation $z$.}
This gives $h(p)=\int_0^p z(1-s)\d s$, which is plotted in Figure \ref{fig:2}. The corresponding regularizer  $\Phi_h$ is given by \begin{equation}\label{eq:nor}
\Phi_h (\Pi) =
\int_0^1 Q_\Pi(1-p)   z  (1-p) \d p =
\int_0^1 Q_\Pi(p)  z (p) \d p,\;\;\Pi \in \mathcal M. \end{equation}
Thus, any Gaussian distribution maximizes the regularize $\Phi_h  $ given by $\Phi_h (\Pi)=\int_0^1 Q_\Pi(p) z (p) \d p$. This example also indicates that there are multiple regularizers (including the above regularizer and differential entropy) that induce Gaussian exploration.
\label{exa:Nor}\end{example}
\begin{figure}[t]
{\setlength{\unitlength}{2.3cm}
\begin{center}
\small
\begin{picture}(2.6,2.4)(0,0)
\put(0,0){\vector(0,1){2.3}}
\put(0,0){\vector(1,0){2.8}}
\put(0,-0.17){$ 0$}

\put(-0.25,1.8){$ 0.4$}
\put(0,1.8) {\small -}
\put(-0.25,1.35){$ 0.3$}
\put(0,1.35) {\small -}
\put(-0.25,0.9){$ 0.2$}
\put(0,0.9) {\small -}
\put(-0.25,0.45){$ 0.1$}
\put(0,0.45) {\small -}
\put(2.8,0.1){$p$}
\put(0.05,2.1){$ h(p)$}
\put(0.43,-0.17){$ 0.2$}
\put(0.95,-0.17){$ 0.4$}
\put(1.47,-0.17){$ 0.6$}
\put(1.99,-0.17){$ 0.8$}
\put(2.57,-0.17){$ 1$}

\linethickness{0.3mm}
\qbezier(0,0)(0.5,1.80)(1.3,1.83)
\qbezier(1.3,1.83)(2.1,1.80)(2.6,0)

\multiput(0.52,0)(0.0,0.1){1}{\line(0,1){0.05}}
\multiput(2.08,0)(0.0,0.1){1}{\line(0,1){0.05}}
\multiput(1.56,0)(0.0,0.1){1}{\line(0,1){0.05}}
\multiput(1.04,0)(0.0,0.1){1}{\line(0,1){0.05}}
\end{picture}\quad \quad \quad \quad\quad\quad
\begin{picture}(2.6,2.4)(0,0)
\put(0,-0.2){\vector(0,1){2.5}}
\put(0,1){\vector(1,0){2.7}}

\put(-0.15,0.95){$ 0$}
\put(-0.15,1.75){$ 4$}
\put(0,1.75) {\small -}
\put(-0.15,1.35){$ 2$}
\put(0,1.35) {\small -}

\put(2.7,1.1){$ p$}
\put(0.04,2.1){$ h'(p)$}
\put(-0.25,0.55){$ -2$}
\put(0,0.55) {\small -}
\put(-0.25,0.15){$ -4$}
\put(0,0.15) {\small -}

\linethickness{0.1mm}
\multiput(2.5,0.98)(0.0,0.1){1}{\line(0,1){0.05}}
\multiput(2,0.98)(0.0,0.1){1}{\line(0,1){0.05}}
\multiput(1.5,0.98)(0.0,0.1){1}{\line(0,1){0.05}}
\multiput(1,0.98)(0.0,0.1){1}{\line(0,1){0.05}}
\multiput(0.5,0.98)(0.0,0.1){1}{\line(0,1){0.05}}

\put(0.4,0.84){$ 0.2$}
\put(0.9,0.84){$ 0.4$}
\put(1.4,0.84){$ 0.6$}
\put(1.9,0.84){$ 0.8$}
\put(2.47,0.84){$ 1$}

\linethickness{0.3mm}
\qbezier(0,1.9)(0.01,1.58)(0.04,1.50)
\qbezier(0.04,1.50)(0.2,1.08)(1.25,1)
\qbezier(1.25,1)(2.41,0.82)(2.46,0.4)
\qbezier(2.46,0.4)(2.49,0.32)(2.5,0)
\end{picture}
\end{center}}
\caption{The plots of $h$ (left panel) and $h'$ (right panel) in Example \ref{ex:gaussian} corresponding to a Gaussian distribution.}\label{fig:2}
\end{figure}
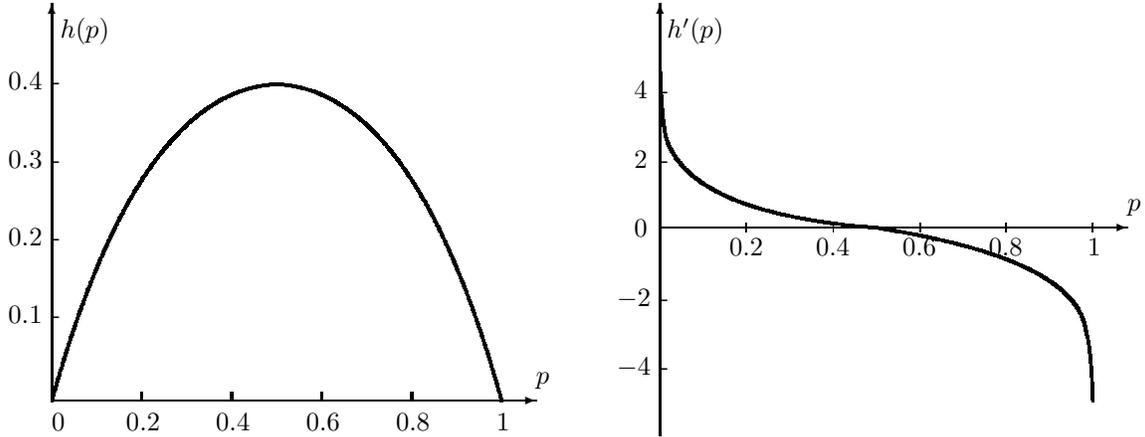

 \subsection{The inter-ES difference as a Choquet regularizer }\label{sec:inter-ES}
We  look at a regularizer  based on ES.  For $\Pi\in \mathcal M$,  ES at level $p$  is defined as 
$$
\mathrm{ES}_{p}(\Pi):=\frac{1}{1-p} \int_{p}^{1} Q_{\Pi}(r) \mathrm{d} r, \quad  p\in(0,1),
$$
and the left-ES is defined as
$$
\mathrm{ES}_{p}^{-}(\Pi):=\frac{1}{p} \int_{0}^{p} Q_{\mathrm{\Pi}}(r) \mathrm{d} r, \quad   p\in(0,1).
$$
For $\alpha\in(0,1)$,  let
  \begin{equation}\label{eq:IER1}h_{\alpha}(p):= p/(1-\alpha) \wedge 1+(\alpha-p)/(1-\alpha) \wedge 0,\;\;p\in [0,1].\end{equation}   Define $\Phi_{h_{\alpha}}={\mathrm{IER}_\alpha}$  by $$
{\mathrm{IER_\alpha}}(\Pi):=\ES_\alpha(\Pi) -\ES^{-}_{1-\alpha}(\Pi) ,
 $$
 which is known as the inter-ES difference. Here, we assume $\alpha\in[1/2,1)$. The inter-ES difference  is a relatively new notion:  it appears in Example 4 of \cite{WWW20c} as a signed Choquet integral.  In a recent work by  \cite{BTWW20}, various   properties  are studied to underline the special role the inter-ES difference plays among other variability measures.


 \begin{proposition}
Suppose that $\alpha\in[1/2,1)$.
For $m\in \R$ and $s^2>0$, the optimization problem
\begin{align*}
\label{eq:IER1}
\max_{\Pi \in \mathcal M^2}{\mathrm{IER}_\alpha} (\Pi) \mbox{~~~~~subject to~} \mu (\Pi) =m ~\mbox{and}~ \sigma^2 (\Pi) = s^2
\end{align*}
is solved by a  three-point distribution $\Pi^*$ with its quantile function uniquely specified as   \begin{equation}\label{eq:QIER2}Q_{\Pi^*}(p)=m+\frac{s}{\sqrt{2(1-\alpha)}}
\left[\id_{\{p>\alpha\}}-\id_{\{p\leq1-\alpha\}}\right],~~\mbox{~a.e.}~p\in (0,1). \end{equation}
\end{proposition}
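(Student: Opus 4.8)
The plan is to recognize the optimization in the statement as an instance of the static problem \eqref{eq:opt} with distortion function $h_\alpha$, and then to invoke Lemma~\ref{lem:liu}. So the first step is to check its hypotheses. Each of the two summands $p\mapsto (p/(1-\alpha))\wedge 1$ and $p\mapsto ((\alpha-p)/(1-\alpha))\wedge 0$ is a minimum of two affine functions, hence concave and continuous; thus $h_\alpha$ is concave and continuous on $[0,1]$. Evaluating the endpoints gives $h_\alpha(0)=0$ and $h_\alpha(1)=0$, so $h_\alpha\in\mathcal H$, and $h_\alpha$ is not identically zero because $\alpha\ge 1/2$ gives $h_\alpha(1-\alpha)=1$. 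Consequently $\Phi_{h_\alpha}$ is a genuine Choquet regularizer and $\Phi_{h_\alpha}=\mathrm{IER}_\alpha$ (the identity recorded above, re-derivable from the quantile representation in Lemma~\ref{lem:qr}), so the problem in the statement is exactly \eqref{eq:opt} with $h=h_\alpha$.

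Next I would compute the right-derivative. Since $\alpha\ge 1/2$ we have $1-\alpha\le\alpha$, and one reads off $h_\alpha'(p)=\tfrac{1}{1-\alpha}$ on $(0,1-\alpha)$, $h_\alpha'(p)=0$ on $(1-\alpha,\alpha)$, and $h_\alpha'(p)=-\tfrac{1}{1-\alpha}$ on $(\alpha,1)$. Hence $\|h_\alpha'\|_2^2=\int_0^1 (h_\alpha'(p))^2\,\d p=\tfrac{2}{1-\alpha}$, i.e.\ $\|h_\alpha'\|_2=\sqrt{2/(1-\alpha)}$. Plugging into Lemma~\ref{lem:liu} and using $\tfrac{1}{1-\alpha}\big/\sqrt{2/(1-\alpha)}=\tfrac{1}{\sqrt{2(1-\alpha)}}$ together with the reflection $h_\alpha'(1-p)$ (which equals $+\tfrac{1}{1-\alpha}$ for $p>\alpha$, $0$ for $1-\alpha<p<\alpha$, and $-\tfrac{1}{1-\alpha}$ for $p<1-\alpha$), a maximizer $\Pi^*$ satisfies
$$
Q_{\Pi^*}(p)=m+s\,\frac{h_\alpha'(1-p)}{\|h_\alpha'\|_2}=m+\frac{s}{\sqrt{2(1-\alpha)}}\bigl[\id_{\{p>\alpha\}}-\id_{\{p<1-\alpha\}}\bigr],\qquad\mbox{a.e. }p\in(0,1),
$$
which is precisely \eqref{eq:QIER2}.

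Finally I would read off the structure of $\Pi^*$. The quantile function above is a step function taking the three values $m-\tfrac{s}{\sqrt{2(1-\alpha)}}$, $m$, $m+\tfrac{s}{\sqrt{2(1-\alpha)}}$ on the intervals $(0,1-\alpha)$, $(1-\alpha,\alpha)$, $(\alpha,1)$, so $\Pi^*$ is the three-point distribution with masses $1-\alpha$, $2\alpha-1$, $1-\alpha$ at these atoms; the standing assumption $\alpha\in[1/2,1)$ is what makes the middle mass $2\alpha-1$ non-negative, so this is a bona fide probability distribution. A one-line check gives $\mu(\Pi^*)=m$ (by symmetry of the two outer atoms about $m$) and $\sigma^2(\Pi^*)=2(1-\alpha)\cdot\tfrac{s^2}{2(1-\alpha)}=s^2$, so $\Pi^*$ is feasible and attains the value $s\|h_\alpha'\|_2=s\sqrt{2/(1-\alpha)}$ predicted by Lemma~\ref{lem:liu}. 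Uniqueness of the quantile function is then immediate: Lemma~\ref{lem:liu} pins the quantile function of any maximizer to the explicit piecewise-constant expression above, and a distribution is determined by its quantile function, so $\Pi^*$ is the unique optimizer.

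As for obstacles, there is no deep difficulty here: the whole argument reduces to Lemma~\ref{lem:liu}. The only points requiring care are (a) verifying concavity and the endpoint conditions for the ``min of affine pieces'' description of $h_\alpha$, (b) bookkeeping of the reflection $p\mapsto 1-p$ and of the normalizing constant $\|h_\alpha'\|_2$ when simplifying the quantile formula, and (c) using the hypothesis $\alpha\ge 1/2$ at exactly the right place, namely to ensure the flat middle piece $[1-\alpha,\alpha]$ of $h_\alpha$ is non-degenerate and the induced middle atom carries the non-negative mass $2\alpha-1$.
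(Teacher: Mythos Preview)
Your proposal is correct and follows essentially the same approach as the paper: compute the right-derivative of $h_\alpha$ and invoke Lemma~\ref{lem:liu} to obtain the quantile formula \eqref{eq:QIER2}, then observe that this is a three-point distribution. Your write-up is considerably more detailed than the paper's (you explicitly verify $h_\alpha\in\mathcal H$, compute $\|h_\alpha'\|_2$, check feasibility, and discuss uniqueness), but the underlying argument is identical.
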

\begin{proof}
Note that for $\Phi_h={\mathrm{IER}_\alpha}$,  we have   $$h'(p)=\frac{1}{1-\alpha}\id_{\{p<1-\alpha\}}-\frac{1}{1-\alpha}\id_{\{p\geq \alpha\}}$$ for $\alpha\in[1/2,1),$
By \eqref{eq:lemma3}, we can  show that a maximizer $\Pi^*$ satisfies \eqref{eq:QIER2}, which is a three-point distribution.
\end{proof}

So the inter-ES difference regularizer encourages exploration at three points. One of them is the mean $m$ corresponding to the best single-point exploitation without exploration, while the other two spots are symmetric to $m$ capturing the exploration part. 

\begin{remark} For $\alpha\in[1/2,1)$,  if we  take the function $h_{\alpha}(p)= \id_{[1-\alpha,\alpha]}(p)$, $p\in[0,1],$  the inter-quantile difference $\Phi_{h_{\alpha}}:={\mathrm{IQR}_\alpha}$  is given by $$
{\mathrm{IQR_\alpha}}(\Pi):=Q^+_{\Pi}(\alpha)-Q_{\Pi}(1-\alpha),
 $$
which  is a classical measure of statistical dispersion widely used  in e.g., box plots.
Unlike the inter-ES difference,  the distortion function $h_{\alpha}$ for ${\mathrm{IQR}_{\alpha}}$ is not concave.  However,  the concave envelopes of $h$ is give by $h^*(p)= p/(1-\alpha) \wedge 1+(\alpha-p)/(1-\alpha) \wedge 0$,   $p\in [0,1]$, which is exactly \eqref{eq:IER1}. According to Theorem 1 in \cite{PWW20}, we have $\sup_{\Pi\in\mathcal M^2}\rm IQR_{\alpha}(\Pi)=\sup_{\Pi\in\mathcal M^2}\rm IER_\alpha(\Pi)$
 and the  maximizer is obtained   by  $\Pi^*$ which satisfies \eqref{eq:QIER2}. Thus, the optimization problem is still solvable even if $h$ is not concave.

\end{remark}

\subsection{The $L^1$-Wasserstein distance to Dirac measures as a Choquet regularizer}\label{sec:Wass}
 Let $W:\M\times\M \to \R_+$ be a statistical distance between two distributions, such as a Wasserstein distance.
 Since an exploration is essentially to move away from degenerate (Dirac) distributions,
a natural way to encourage  exploration  is to use $W(\Pi,\delta_x)$, where $\delta_x$ is the Dirac measure at $x\in \R$, as a regularizer. Moreover, to remove the location dependence, we modify the regularizer to be  $\min_{x\in \R} W(\Pi,\delta_x)$.
For any  statistical distance satisfying $W(\Pi,\hat\Pi)=0$ if and only if $\Pi=\hat\Pi$, it is clear that
$\min_{x\in \R} W(\Pi,\delta_x)=0$ if and only if $\Pi$ itself is a Dirac measure (hence deterministic).

The use of Wasserstein distance
to model distributional uncertainty in other settings naturally gives rise to a regularization term, yielding a theoretical justification for its use in practice; see for example  \cite{PW07}, \cite{EK17} and \cite{BCZ21} that  formulate different  models with distributional robustness based on Wasserstein distances.

 We focus on the case where $W$ is the Wasserstein $L^1$ distance, defined as
 $$W_1(\Pi, \hat\Pi):= \int_0^1  | Q_\Pi (p) - Q_{\hat\Pi}(p)| \d p.$$
 In this case, $W_1(\Pi, \delta_x)$ is the $L^1$ distance between $x$ and $X\sim \Pi$, and it is well known via $L^1$ loss minimization that the minimizers of
 $\min_{x\in \R} W_1(\Pi,\delta_x)$ are the medians of $\Pi$ (unique if $Q_\Pi$ is continuous):
  $$
  \argmin_{x\in \R} W_1(\Pi,\delta_x)  =[Q_\Pi(1/2), Q^+_\Pi(1/2)].
 $$
 Moreover,  for a median of $\Pi$, $x^*\in [Q_\Pi(1/2), Q^+_\Pi(1/2)]$,  we have that $W_1(\Pi, \delta_{x^*})$ is the mean-median deviation; namely
\begin{align*}
 \min_{x\in \R} W_1(\Pi,\delta_x)   & = W_1(\Pi, \delta_{x^*}) \\ & =    \int_0^{1/2} (x^*-Q_\Pi(p))\d p+ \int_{1/2}^1 (Q_\Pi(p) -x^*)\d p  \\ & =   \int_{1/2}^1  Q_\Pi(p) \d p - \int_0^{1/2}  Q_\Pi(p)\d p.
 \end{align*}
This in turn shows that $  \argmin_{x\in \R} W_1(\Pi,\delta_x) $ belongs to the class of Choquet regularizers.

 \begin{proposition}\label{prop:5}
For $m\in \R$ and $s^2>0$, the optimization problem
$$\begin{aligned}
\label{eq:WD}
\max_{\Pi \in \mathcal M^2}\min_{x\in \R} W_1(\Pi,\delta_x) \mbox{~~~~~subject to~} \mu (\Pi) =m ~\mbox{and}~ \sigma^2 (\Pi) = s^2,
\end{aligned}
$$
is solved by a unique $\Pi^*$ with the quantile function specified as \begin{equation}\label{eq:bbsol}
	Q_{\Pi^*}(p) = m+s\id_{\{p> 1/2\}}-s\id_{\{p\le 1/2\}} , ~~ a.e.~p\in (0,1).
	\end{equation}
\end{proposition}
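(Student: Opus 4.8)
The plan is to recognize the objective as a Choquet regularizer and then invoke Lemma~\ref{lem:liu}. The computation preceding the proposition already establishes that, for every $\Pi\in\mathcal M^2$,
$$\min_{x\in\R}W_1(\Pi,\delta_x)=\int_{1/2}^1 Q_\Pi(p)\,\d p-\int_0^{1/2}Q_\Pi(p)\,\d p=\int_0^1 Q_\Pi(p)\big(\id_{\{p>1/2\}}-\id_{\{p\le1/2\}}\big)\,\d p,$$
the mean--median deviation of $\Pi$. First I would match the right-hand side against the quantile representation of $\Phi_h$ in Lemma~\ref{lem:qr} (applicable since the candidate distortion function is continuous), which identifies the functional as $\Phi_h$ with $h(p)=p\wedge(1-p)$, $p\in[0,1]$. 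This $h$ is concave, continuous, not constantly zero, and satisfies $h(0)=h(1)=0$, so $h\in\mathcal H$ and Lemma~\ref{lem:liu} applies to the problem in the statement.

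It then remains to substitute $h$ into \eqref{eq:lemma3}. The right-derivative is $h'(p)=\id_{\{p<1/2\}}-\id_{\{p\ge1/2\}}$ for a.e.\ $p\in(0,1)$, hence $h'(1-p)=\id_{\{p>1/2\}}-\id_{\{p\le1/2\}}$ and $||h'||_2^2=\int_0^1(h'(p))^2\,\d p=1$. Plugging into \eqref{eq:lemma3} yields $Q_{\Pi^*}(p)=m+s\big(\id_{\{p>1/2\}}-\id_{\{p\le1/2\}}\big)$ for a.e.\ $p\in(0,1)$, which is exactly \eqref{eq:bbsol}; this is the two-point law placing mass $1/2$ on each of $m-s$ and $m+s$, and a one-line check confirms it has mean $m$ and variance $s^2$. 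Uniqueness is automatic, since Lemma~\ref{lem:liu} asserts that \emph{any} maximizer has this quantile function a.e., and a quantile function determines the distribution.

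I do not expect a genuine obstacle here. The only step requiring a moment of care is confirming that $h(p)=p\wedge(1-p)$ lies in $\mathcal H$ and is continuous, so that Lemma~\ref{lem:liu} (and the quantile representation used to reach it) is legitimately applicable; this is immediate from the triangular shape of $h$. Everything else is a direct substitution, simplified by the fact that $||h'||_2=1$ makes the normalizing constant in \eqref{eq:lemma3} trivial.
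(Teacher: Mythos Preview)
Your proposal is correct and follows essentially the same route as the paper: identify $\min_{x\in\R}W_1(\Pi,\delta_x)$ as the Choquet regularizer $\Phi_h$ with $h(p)=p\wedge(1-p)$ (equivalently $h'(p)=\id_{\{p<1/2\}}-\id_{\{p\ge1/2\}}$) via Lemma~\ref{lem:qr}, then plug into Lemma~\ref{lem:liu} using $\|h'\|_2=1$ to obtain \eqref{eq:bbsol}. Your write-up is slightly more detailed than the paper's (you explicitly verify $h\in\mathcal H$ and address uniqueness), but the argument is the same.
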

\begin{proof}
 Applying Lemma \ref{lem:qr} to get $\min_{x\in \R} W_1(\Pi,\delta_x)=\Phi_h(\Pi)$ with $h'(p)=1$ for $p< 1/2$ and $h'(p)=-1$ for $p\ge1/2$.
 Using  \eqref{eq:lemma3} in Lemma \ref{lem:liu} yields \eqref{eq:bbsol},
which implies a symmetric two-point distribution.   \end{proof}

As    $\Phi_h(\Pi)=\min_{x\in \R} W_1(\Pi,\delta_x)$  induces a symmetric exploration around the mean, we call it  a symmetric Wasserstein regularizer  with $h(p)=p \id_{\{p<1/2\}} +(1-p)\id_{\{p\ge1/2\}}$.
Next, let us discuss two-point asymmetric exploration. Suppose that two directions are not symmetric, and we would like to regularize in a way to encourage more exploration in a certain direction. Take a constant  $\alpha\in(0,1)$, and choose $W$ as an asymmetric Wasserstein distance 
 $$
 W_1^{\alpha}(\Pi, \hat\Pi)= \int_0^1  \big(\alpha( Q_\Pi (p) - Q_{\hat\Pi}(p))_+ + (1-\alpha)( Q_\Pi (p) - Q_{\hat\Pi}(p))_-\big)  \d p.$$
The corresponding minimizers are the $\alpha$-quantiles
  $$
  \argmin_{x\in \R} W^\alpha_1(\Pi,\delta_x)  =[Q_\Pi(\alpha), Q^+_\Pi(\alpha)],
 $$
  and  for $x^*\in [Q_\Pi(\alpha), Q_\Pi(\alpha)]$,  we have
\begin{align*}
 \min_{x\in \R} W^\alpha_1(\Pi,\delta_x)   & = W^\alpha_1(\Pi, \delta_{x^*}) \\ & =    \int_0^{\alpha} (1-\alpha) (x^*-Q_\Pi(p))\d p+ \int_{\alpha}^1\alpha (Q_\Pi(p) -x^*)\d p  \\ & = \alpha \int_{\alpha}^1  Q_\Pi(p)
 \d p   -   (1-\alpha)   \int_0^{\alpha} Q_\Pi(p) \d p.
 \end{align*}
 We call   $\Phi_h(\Pi)=\min_{x\in \R} W^\alpha_1(\Pi,\delta_x)$   an asymmetric Wasserstein regularizer  with $h(p)=\alpha p \id_{\{p< 1-\alpha\}} +(1-\alpha)(1-p)\id_{\{p\ge 1-\alpha\}}$.
 \begin{proposition}

For $m\in \R$ and $s^2>0$, the optimization problem
\begin{align*}
\label{eq:WD}
\max_{\Pi \in \mathcal M^2}\min_{x\in \R} W^\alpha_1(\Pi,\delta_x) \mbox{~~~~~subject to~} \mu (\Pi) =m  ~\mbox{and}~ \sigma^2 (\Pi) = s^2
\end{align*}
has a unique maximizer $\Pi^*$  with the quantile function uniquely specified as \begin{equation}\label{eq:asW}
Q_{\Pi^*}(p) = m+s\left(\frac{ \alpha }{1-\alpha}\right)^{1/2}\id_{\{p> \alpha\}}-s\left(\frac{ 1-\alpha }{\alpha}\right)^{1/2} \id_{\{p\leq\alpha\}} , ~~\mbox{~a.e.}~ p\in (0,1).
 \end{equation}
\end{proposition}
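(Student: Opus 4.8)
The plan is to deduce the statement directly from Lemma~\ref{lem:liu}, reusing the distortion function identified for the asymmetric Wasserstein regularizer just above. First I would record that, by the explicit formula for $\min_{x\in\R}W^\alpha_1(\Pi,\delta_x)$ computed above together with the quantile representation in Lemma~\ref{lem:qr}, one has $\min_{x\in\R}W^\alpha_1(\Pi,\delta_x)=\Phi_h(\Pi)$ for
$$h(p)=\alpha p\,\id_{\{p<1-\alpha\}}+(1-\alpha)(1-p)\,\id_{\{p\ge1-\alpha\}},\qquad h'(p)=\alpha\,\id_{\{p<1-\alpha\}}-(1-\alpha)\,\id_{\{p>1-\alpha\}}.$$
I would then check that $h\in\mathcal H$ and that the hypotheses of Lemma~\ref{lem:liu} hold: $h$ is continuous (the two linear branches agree at $p=1-\alpha$), concave (its slope drops from $\alpha$ to $-(1-\alpha)<\alpha$), satisfies $h(0)=h(1)=0$, and is not identically zero since $h(1-\alpha)=\alpha(1-\alpha)>0$.

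Next I would compute the normalizing constant, $\|h'\|_2^2=\int_0^1 (h'(p))^2\,\d p=\alpha^2(1-\alpha)+(1-\alpha)^2\alpha=\alpha(1-\alpha)$, so $\|h'\|_2=\sqrt{\alpha(1-\alpha)}$. Substituting into \eqref{eq:lemma3}, any maximizer $\Pi^*$ has, a.e.\ on $(0,1)$,
$$Q_{\Pi^*}(p)=m+\frac{s}{\sqrt{\alpha(1-\alpha)}}\,h'(1-p)=m+\frac{s}{\sqrt{\alpha(1-\alpha)}}\left(\alpha\,\id_{\{p>\alpha\}}-(1-\alpha)\,\id_{\{p<\alpha\}}\right),$$
and using $\alpha/\sqrt{\alpha(1-\alpha)}=(\alpha/(1-\alpha))^{1/2}$ and $(1-\alpha)/\sqrt{\alpha(1-\alpha)}=((1-\alpha)/\alpha)^{1/2}$ this is precisely \eqref{eq:asW}. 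Since Lemma~\ref{lem:liu} forces every maximizer to have this quantile function, and a law on $\R$ is determined by its quantile function, the maximizer is unique; a one-line check confirms that the resulting asymmetric two-point distribution has mean $m$ and variance $s^2$, hence is admissible and attains the optimal value $s\|h'\|_2$.

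The step most prone to error is the change of variable $p\mapsto 1-p$ that links the expression $\alpha\int_\alpha^1 Q_\Pi(p)\,\d p-(1-\alpha)\int_0^\alpha Q_\Pi(p)\,\d p$ with the form $\Phi_h(\Pi)=\int_0^1 Q_\Pi(1-p)\,\d h(p)$ of Lemma~\ref{lem:qr}: it is this reflection that puts the weight $\alpha$ on the upper tail $\{p>\alpha\}$ of $Q_{\Pi^*}$ rather than on $\{p<\alpha\}$. A related trivial point is that the values of $h'$ at $p=1-\alpha$ and of $Q_{\Pi^*}$ at $p=\alpha$ are irrelevant, all identities holding only almost everywhere. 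Beyond this, the argument is routine and I do not expect a genuine obstacle, since Lemma~\ref{lem:liu} already packages the optimization.
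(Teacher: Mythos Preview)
Your proposal is correct and follows essentially the same route as the paper: identify $h'(p)=\alpha\,\id_{\{p<1-\alpha\}}-(1-\alpha)\,\id_{\{p\ge1-\alpha\}}$ for the asymmetric Wasserstein regularizer and then apply \eqref{eq:lemma3} of Lemma~\ref{lem:liu}. You simply supply more detail (verifying $h\in\mathcal H$, computing $\|h'\|_2$, and carrying out the substitution and simplification explicitly), whereas the paper's proof states $h'$ and invokes \eqref{eq:lemma3} in two lines.
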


 \begin{proof}
For  $\Phi_h(\Pi)=\min_{x\in \R} W^\alpha_1(\Pi,\delta_x)$, we have
$$ \begin{aligned}
 h'(p)=\alpha ~ ~\text{for}~~ p<1-\alpha ~~\text{and}~ ~h'(p)=-1+\alpha~~ \text{for}~~ p\ge1-\alpha.  \label{eq:bbh}
 \end{aligned}$$
 Using \eqref{eq:lemma3}, the optimization problem   has a solution $\Pi^*$ satisfying \eqref{eq:asW},
   which is an asymmetric two-point distribution. 
 \end{proof}

To recap, the Wasserstein $L^1$ regularization encourages possibly  asymmetric (with respect to the mean)  two-point exploration, which is an instance of the bang-bang exploration in Example \ref{ex:bb}.
  \subsection{The Gini mean difference or maxiance as a Choquet regularizer}\label{sec:Gini}

By letting $h(p)= p-p^2$, $p\in [0,1]$, we consider the regularizer $\Phi_\sigma:=\Phi_h$ given by
 $$
\Phi_\sigma(\Pi)=   \int_\R  \Big(  \Pi([x,\infty)) -  \Pi^2 ([x,\infty)) \Big)\d x.
 $$
 There are two ways to represent  $ \Phi_\sigma(\Pi)  $ in terms of  two iid copies $X_1$ and $X_2$ from the distribution $\Pi$.
 First, $\Phi_\sigma$ can be rewritten as  $$
  \Phi_\sigma(\Pi) = \frac 12 \E[|X_1-X_2|],
 $$
 which is the \emph{Gini mean difference} (e.g., \citealp{FWZ17}; sometimes without the factor  $1/2$).
 Alternatively,  $\Phi_\sigma$ can be  represnted  as
 $$
 \Phi_\sigma(\Pi) = \E[ \max\{X_1,X_2\}]-\mu(\Pi),
 $$
 which is called  the \emph{maxiance}   by \cite{EL20}.
 The two representations are identical as seen from the following equality \begin{align*}
  \E[ \max\{X_1,X_2\}]- \mu(\Pi)  & = \E\left[ \max\{X_1,X_2\}-  \frac12(X_1+X_2)\right]
  \\&=  \E\left[ \max\{X_1,X_2\}-\frac12 \left({\max\{X_1,X_2\}}+ \min\{X_1,X_2\} \right)\right]
  \\&= \frac12   \E\left[ \max\{X_1,X_2\}- \min\{X_1,X_2\} \right]  = \frac 12 \E[|X_1-X_2|].
 \end{align*}
As argued by \cite{EL20},  the  maxiance can be seen as the dual version of  the variance, due to the following identities
$$
\sigma^2(\Pi) = \int_\R (x-\mu(\Pi))^2 \d \Pi ,~~~~\Phi_\sigma(\Pi) = \int_\R (x-\mu(\Pi)) \d \Pi^2.
$$
Moreover, the  maxiance can be used to approximate a local index of absolute risk aversion in \cite{Y87}'s dual theory of choice under risk, which is similar to the role of  variance in the classic expected utility theory.

We now show that the maxiance regularizer  $\Phi_\sigma$ leads to a uniform distribution for exploration.
\begin{proposition}\label{prop:uni}
For $m\in \R$ and $s^2>0$, the optimization problem
\begin{align}
\label{eq:opt3}
\max_{\Pi \in \mathcal M^2}\Phi_\sigma (\Pi) \mbox{~~~~~subject to~} \mu (\Pi) =m ~\mbox{and}~ \sigma^2 (\Pi) = s^2
\end{align}
has a unique maximizer $\Pi^*=\mathrm{U}[m-\sqrt 3s,m+\sqrt 3s]$.
\end{proposition}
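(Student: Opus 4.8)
The plan is to apply Lemma \ref{lem:liu} directly to the distortion function $h(p)=p-p^2$. First I would verify that $h\in\mathcal H$ and satisfies the hypotheses of Lemma \ref{lem:liu}: $h$ is continuous on $[0,1]$, $h(0)=h(1)=0$, $h''(p)=-2<0$ so $h$ is concave, and $h$ is not constantly zero. Hence the lemma is applicable and any maximizer $\Pi^*$ of \eqref{eq:opt3} has quantile function $Q_{\Pi^*}(p)=m+s\,h'(1-p)/||h'||_2$ for a.e. $p\in(0,1)$, with maximum value $s||h'||_2$.

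Next I would compute the two quantities appearing in that formula. The right-derivative is $h'(p)=1-2p$, so $h'(1-p)=2p-1$, and
$$||h'||_2^2=\int_0^1(1-2p)^2\,\d p=\frac13,$$
so $||h'||_2=1/\sqrt3$. Substituting gives $Q_{\Pi^*}(p)=m+\sqrt3\,s\,(2p-1)=(m-\sqrt3 s)+2\sqrt3 s\,p$ for a.e. $p\in(0,1)$. I would then recognize this affine quantile function as exactly the quantile function of the uniform law on $[m-\sqrt3 s,\,m+\sqrt3 s]$, whose quantile is $p\mapsto a+(b-a)p$ with $a=m-\sqrt3 s$ and $b=m+\sqrt3 s$; as a consistency check, this law has mean $(a+b)/2=m$ and variance $(b-a)^2/12=s^2$, so it indeed lies in the feasible set $\mathcal M^2(m,s^2)$.

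For uniqueness, I would invoke the fact that Lemma \ref{lem:liu} pins down the quantile function of \emph{every} maximizer a.e. on $(0,1)$, together with the elementary observation that two probability distributions with a.e.-equal quantile functions coincide; hence $\Pi^*=\mathrm{U}[m-\sqrt3 s,\,m+\sqrt3 s]$ is the unique maximizer. There is essentially no hard step in this argument — the computations of $h'$ and $||h'||_2$ are routine and the rest is recognition of a standard quantile function — so the only point warranting a line of care is this last passage from "quantile function determined a.e." to uniqueness of the maximizing distribution.
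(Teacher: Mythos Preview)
Your proposal is correct and follows essentially the same route as the paper: both apply Lemma \ref{lem:liu} with $h'(p)=1-2p$, compute $\|h'\|_2=1/\sqrt3$, and identify the resulting affine quantile function as that of $\mathrm{U}[m-\sqrt3 s,m+\sqrt3 s]$. The one noteworthy difference is the uniqueness step: the paper does not rely on Lemma \ref{lem:liu} alone but instead cites Theorem 2 of \cite{PWW20}, whereas you argue that Lemma \ref{lem:liu} already pins down the quantile function of \emph{every} maximizer a.e. Your reading of the lemma's phrasing is defensible, and the underlying Cauchy--Schwarz/$L^2$ argument does give uniqueness when $h'$ is not a.e.\ constant; but since the lemma as stated says only ``a maximizer $\Pi^*$ \dots has the following quantile function,'' and the authors themselves reach for an external reference here, it would be safer either to cite the same result or to add one line explaining why equality in the relevant inequality forces the quantile function to equal $m+s\,h'(1-\cdot)/\|h'\|_2$ a.e.
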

\begin{proof}
Note that for $\Phi_h=\Phi_\sigma$,  we have $h'(p)=1-2p$. It follows from \eqref{eq:lemma3} that a maximizer $\Pi^*$ is a uniform distribution. By matching the moments in \eqref{eq:opt3}, we obtain $\Pi^*=\mathrm{U}[m-\sqrt 3s,m+\sqrt 3s]$.
The uniqueness statement is guaranteed by e.g. Theorem 2 of \cite{PWW20}.
\end{proof}

Proposition \ref{prop:uni} provides a foundation for a uniformly distributed exploration strategy on $\R$. Note that this is different from the result of uniform distributions maximizing entropy   on a fixed, given  bounded region: here in our setting the region is {\it not} fixed, since we allow $\Pi$ to be chosen from arbitrary distributions on $\R$, and thus the bounded region $[m-\sqrt 3s,m+\sqrt 3s]$ is endogenously derived rather than exogenously  given.

\begin{remark}
The inequality
$$
\sigma(\Pi) \ge \sqrt{3}\Phi_\sigma(\Pi) \mbox{~for all $\Pi \in \mathcal M^2$}
$$
 is known as Glasser's inequality (\citealt{G62}).
For the uniform distribution $\Pi^* $ in Proposition \ref{prop:uni} with $\sigma(\Pi^*)=s$,  we have
$\Phi_\sigma(\Pi^*) =\sqrt{3}s/3$ by Lemma \ref{lem:liu}. Thus, $\Pi^*$ attains the sharp bound of Glasser's inequality, which holds naturally since $\Pi^*$ maximizes $\Phi_\sigma$ for a fixed $\sigma^2$.
\end{remark}

\section{Solving the exploratory stochastic LQ control problem}\label{sec:6}
We are now ready to solve the exploratory stochastic LQ control  problem  presented in Section \ref{sec:3}.
Let \begin{equation}\label{eq:W} W(x,\Pi)=
 \E_x \left[ \int_0^\infty e^{-\rho t} \left( \tilde{r}(X_{t}^{\Pi}, \Pi_t)  + \lambda \Phi_h(\Pi_t)
\right)\d t \right], \;\;x\in \mathbb{R},~ \Pi \in \mathcal A(x).
\end{equation}   We have the following result based on Lemma \ref{lem:liu}.
 \begin{proposition}\label{prop:LQ_prob}
Let a continuous $h\in \mathcal H$ be given.
 For any  $\Pi=\{\Pi_t\}_{t\ge 0}\in \mathcal A(x)$ with mean process $\{\mu_t\}_{t\ge 0}$ and variance process $\{\sigma_t^2\}_{t\ge 0}$, there exists  $\Pi^*=\{\Pi^*_t\}_{t\ge 0}\in \mathcal A(x)$ given by
 \begin{equation}\label{eq:general}
Q_{\Pi^*_t}(p) = \mu_t + \sigma_t\frac{ h'(1-p) }{ ||h'||_2}, ~~ ~~\mbox{~a.e.}~p\in (0,1),\;\;t\geq0,
\end{equation}
 which has the same mean  and variance processes satisfying
 $W(x,\Pi^*)\ge W(x,\Pi)$.

 \end{proposition}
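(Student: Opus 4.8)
The plan is to fix an arbitrary admissible control $\Pi = \{\Pi_t\}_{t\ge 0} \in \mathcal A(x)$ with mean process $\{\mu_t\}_{t\ge 0}$ and variance process $\{\sigma_t^2\}_{t\ge 0}$, and then define $\Pi^* = \{\Pi^*_t\}_{t\ge 0}$ pointwise in $t$ by the quantile formula \eqref{eq:general}. The first thing to verify is that $\Pi^*_t$ is a genuine probability distribution in $\mathcal M^2$ with $\mu(\Pi^*_t) = \mu_t$ and $\sigma^2(\Pi^*_t) = \sigma_t^2$: this is exactly the content of Lemma \ref{lem:liu} applied with $m = \mu_t$ and $s = \sigma_t$ (the case $\sigma_t = 0$ is handled by setting $\Pi^*_t = \delta_{\mu_t}$, consistent with the formula in the limit). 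Here I would also record that $\Phi_h(\Pi^*_t) = \sigma_t \|h'\|_2$, again from Lemma \ref{lem:liu}, which is the maximal value of $\Phi_h$ among all distributions with mean $\mu_t$ and variance $\sigma_t^2$; in particular $\Phi_h(\Pi^*_t) \ge \Phi_h(\Pi_t)$ for every $t$.

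Next I would check that $\Pi^*$ is admissible, i.e. $\Pi^* \in \mathcal A(x)$. Conditions (i), (iii) and (iv) in the definition of $\mathcal A(x)$ depend on $\Pi$ only through its mean and variance processes, and since $\Pi^*$ has the same $\{\mu_t\}$ and $\{\sigma_t^2\}$ as $\Pi$, these carry over verbatim; in particular, by \eqref{eq:X_t} the exploratory state process $X^{\Pi^*}$ coincides in law (indeed solves the same SDE) with $X^{\Pi}$, so condition (iv) holds. For measurability (condition (ii)), I would note that $(m,s) \mapsto \Pi^*$ given by \eqref{eq:general} is a Borel map from $\R \times [0,\infty)$ into $\mathcal M$ (equipped with, say, the topology of weak convergence), so $\Pi^*_t(A)$ is progressively measurable as a composition of the progressively measurable process $(\mu_t,\sigma_t)$ with this Borel map. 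Condition (v), finiteness of the expected absolute regularized reward, follows because $\tilde r(X^{\Pi^*}_t, \Pi^*_t)$ depends only on $X^{\Pi^*}_t$, $\mu_t$, $\sigma_t^2$ by \eqref{eq:2} (and $X^{\Pi^*}_t \stackrel{d}{=} X^{\Pi}_t$), while $0 \le \Phi_h(\Pi^*_t) = \sigma_t\|h'\|_2 \le \tfrac{1}{2}\|h'\|_2(1+\sigma_t^2)$ is dominated by an integrable process thanks to condition (iii) for $\Pi$ and condition (v) for $\Pi$.

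Finally, the inequality $W(x,\Pi^*) \ge W(x,\Pi)$ is obtained by comparing the integrands in \eqref{eq:W} pointwise in $t$. By \eqref{eq:2}, $\tilde r(y,\Pi)$ depends on $\Pi$ only through $\mu(\Pi)$ and $\sigma^2(\Pi)$; since $X^{\Pi^*}$ and $X^{\Pi}$ have identical distributions at each $t$ (they solve the same SDE \eqref{eq:X_t}) and $\mu(\Pi^*_t) = \mu_t$, $\sigma^2(\Pi^*_t) = \sigma_t^2$, we get $\E_x[\tilde r(X^{\Pi^*}_t,\Pi^*_t)] = \E_x[\tilde r(X^{\Pi}_t,\Pi_t)]$ for every $t$. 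On the other hand $\Phi_h(\Pi^*_t) \ge \Phi_h(\Pi_t)$ for every $t$ by the optimality in Lemma \ref{lem:liu}, and $\lambda > 0$, so the regularized integrand for $\Pi^*$ dominates that for $\Pi$; integrating against $e^{-\rho t}\d t$ and taking expectations (all finite by condition (v)) yields $W(x,\Pi^*) \ge W(x,\Pi)$. The only mildly delicate point I anticipate is the measurability argument for condition (ii) — confirming that the quantile-to-distribution map $(m,s)\mapsto \Pi^*$ is Borel and that progressive measurability is preserved under composition — but this is standard once one fixes a Polish topology on $\mathcal M$; everything else reduces to bookkeeping with Lemma \ref{lem:liu} and the mean–variance dependence of the LQ coefficients.
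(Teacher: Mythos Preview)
Your proposal is correct and follows essentially the same approach as the paper's own proof: observe via \eqref{eq:X_t} and \eqref{eq:2} that the expected discounted reward term depends on $\Pi$ only through its mean and variance processes, then apply Lemma \ref{lem:liu} pointwise in $t$ to replace $\Pi_t$ by the mean--variance matched maximizer $\Pi^*_t$. Your treatment is in fact more thorough on the admissibility verification (conditions (i)--(v)), which the paper dispatches with a single ``Clearly''; the measurability point you flag is indeed the only place requiring a moment's care, and your sketch of handling it via the Borel map $(m,s)\mapsto\Pi^*$ is adequate.
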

\begin{proof}
It follows from  \eqref{eq:X_t} and \eqref{eq:2} that the term $ \E_x \left[ \int_0^\infty e^{-\rho t}  \tilde{r}(X_t^\Pi, \Pi_t)
\d t \right]$ in \eqref{eq:W}   only depends on  the mean process $\{\mu_t\}_{t\ge 0}$ and  variance process $\{\sigma_t^2\}_{t\ge 0}$ of $\{\Pi_t\}_{t\ge 0}$. Thus,
for any fixed $t\geq 0$, choose $\Pi_t^*$ with mean $\mu_t$ and variance $\sigma_t^2$
that maximizes $\Phi_h(\Pi)$. Form  Lemma \ref{lem:liu}, it follows that $\Pi^*_t$  satisfies \eqref{eq:general}
and the  maximum value  is $\Phi_h(\Pi_t)= \sigma_t||h'||_2$. Clearly, 
the strategy $\Pi^*=\{\Pi^*_t\}_{t\ge 0}\in \mathcal A(x)$ is the desired one.  
\end{proof}

Proposition \ref{prop:LQ_prob} indicates that the control  problem  \eqref{eq:rl} in the LQ setting  
 is  maximized within  a location--scale family of distributions, which is determined only by $h$.
 
We go back to the HJB equation  \eqref{eq:HJB3}.  It follows from \eqref{eq:double}--\eqref{eq:opt} along with Lemma \ref{lem:liu} that \eqref{eq:HJB3} is equivalent to
\begin{equation}\label{eq:HJB2}\begin{aligned}
\rho v(x)=& \max _{\mu\in\R, \sigma>0}\Big[-Rx\mu-\frac{N}{2}\left(\mu^{2}+\sigma^{2}\right)-L\mu+\lambda \sigma\left\|h'\right\|_2+ C D x \mu v^{\prime \prime}(x)+\frac{1}{2} D^{2}\left(\mu^{2}+\sigma^{2}\right) v^{\prime \prime}(x)\\&+B \mu v^{\prime}(x)\Big]+Axv'(x)-\frac{M}{2}x^2-Px+\frac{1}{2}C^2x^2v''(x).\end{aligned}\end{equation}
Applying the first-order conditions, we  get  the maximizers
$$\mu^*(x)=\frac{CDxv''(x)+Bv'(x)-Rx-L}{N-D^2v''(x)} ~~\text{and}~~ (\sigma^*( x))^2=\frac{\lambda^2 \left\|h'\right\|^2_2}{(N-D^2v''(x))^2}$$
of the max operator in \eqref{eq:HJB2}, which in turn leads to the optimal
distributional policy $\Pi^*(\cdot;x)$ prescribed by Lemma \ref{lem:liu}.

Bringing the above expressions of $\mu^*(x)$ and $\sigma^*(x)$ back into \eqref{eq:HJB2}, we can further write  the HJB equation as
\begin{equation}\label{eq:5}\rho v(x)=\frac{[CDxv''(x)+Bv'(x)-Rx-L]^2+\lambda^2\left\|h'\right\|^2_2}{2[N-D^2v''(x)]}+\frac{1}{2}\left[C^2v''(x)-M\right]x^2+[Av'(x)-P]x.\end{equation}
We now solve this equation explicitly. Denote
$$\Delta=[\rho-(2 A+C^{2})] N+2(B+CD)R-D^{2} M.$$
 Under the assumptions that $\rho>2A+C^2$ and $MN>R^2$, a smooth solution to \eqref{eq:5} is given by
$$
v(x)=\frac{1}{2} k_{2} x^{2}+k_1x+k_{0},
$$
where\footnote{Values of $k_2$, $k_1$ and $k_0$ are obtained by solving the system of  equations
$\rho k_{2}=\frac{(k_{2}(B+C D)-R)^{2}}{N-k_{2} D^{2}}+k_{2}\left(2 A+C^{2}\right)-M,$
$\rho k_{1}=\frac{\left(k_{1} B-L\right)\left(k_{2}(B+C D)-R\right)}{N-k_{2} D^{2}}+k_{1} A-P$,
and
$\rho k_0=\frac{(k_1B-L)^2+\lambda^2\|h'\|_2^2}{2(N-k_2D^2)}.$}
\begin{equation}\label{eq:k2}
k_{2}= \frac{\Delta-\sqrt{\Delta^{2}-4[(B+C D)^{2}+(\rho-(2 A+C^{2})) D^{2}](R^2-M N)}}{2[(B+C D)^{2}+D^{2}(\rho-(2 A+C^{2})) ]} ,
\end{equation}
 \begin{equation}\label{eq:k1}
k_{1}=\frac{P\left(N-k_{2} D^{2}\right)-L R}{k_{2} B(B+C D)+(A-\rho)\left(N-k_{2} D^{2}\right)-B R},	
\end{equation} and \begin{equation}\label{eq:k0}k_0=\frac{(k_1B-L)^2+\lambda^2\|h'\|_2^2}{2\rho(N-D^2k_2)}.\end{equation}

We can verify easily that $k_{2}<0$. Hence, $v$ is concave, a property that is essential for $v$ to be actually the value function.
Next, we state  the main result of this section, whose proof  follows essentially the same lines of that of Theorem 4 in \cite{WZZ20a}, thanks to the analysis above and the results obtained. We omit the  details here.
\begin{theorem}\label{thm:3}
Consider the LQ control specified by \eqref{eq:abcd}--\eqref{eq:r}, where we assume
$M\geq0$, $N>0,$ $MN>R^2$ and \footnote{The constraint  on  $\rho$  is used not  only  to ensure $k_2<0$ but also to show $\liminf _{T \rightarrow \infty} e^{-\rho T} \mathbb{E} [ (X_{T}^{\Pi} )^{2} ]=0$; see  the proof of Theorem 4 in \cite{WZZ20a} for more details.}   $$\rho>2 A+C^{2}+\max \left(\frac{D^{2} R^{2}-2 N R(B+C D)}{N}, 0\right).$$
Then the value function in \eqref{eq:rl} is given by
$$
V(x)=\frac{1}{2} k_{2} x^{2}+k_1x+k_{0}, \quad x \in \mathbb{R},
$$
where $k_{2}$, $k_1$  and $k_{0}$ are as in \eqref{eq:k2}-\eqref{eq:k0}, respectively.  The  optimal feedback  policy has the distribution  function $\Pi^*(\cdot;x)$  whose quantile function is
\begin{equation}\label{eq:opt_pi}
Q_{\Pi^*(\cdot;x)}(p) = \frac{(k_2(B+CD)-R)x+k_1B-L}{N-k_2D^2} + \frac{\lambda h'(1-p)}{N-k_2D^2}, ~~\mbox{~a.e.}~ p\in (0,1),\quad x \in \mathbb{R},\end{equation} with the mean and variance  given by  \begin{equation}\label{eq:mu_sigma}\mu^*(x)=\frac{(k_2(B+CD)-R)x+k_1B-L}{N-k_2D^2} ~~\text{and}~~ (\sigma^*(x))^2=\frac{\lambda^2 \left\|h'\right\|^2_2}{(N-k_2D^2)^2},\quad x \in \mathbb{R}.\end{equation}
Finally, the associated optimal state process $\left\{X_{t}^{*}\right\}_{t \geq 0}$ with $X_{0}^{*}=x$ under $\Pi^{*}(\cdot ; \cdot)$ is the unique solution of the SDE
\begin{align*}
\d X_{t}^{*}&=\left[\left(A+\frac{B\left(k_{2}(B+C D)-R\right)}{N-k_{2} D^{2}}\right) X_{t}^{*}+\frac{B\left(k_{1} B-L\right)}{N-k_{2} D^{2}}\right] \d t \\
&\quad+\sqrt{\left[\left(C+\frac{D\left(k_{2}(B+C D)-R\right)}{N-k_{2} D^{2}}\right) X_{t}^{*}+\frac{D\left(k_{1} B-L\right)}{N-k_{2} D^{2}}\right]^{2}+\frac{D^2\lambda^2 \left\|h'\right\|^2_2 D^2}{(N-k_{2} )^2}} \d W_{t}.
\end{align*}
\end{theorem}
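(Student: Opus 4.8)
The plan is to run the standard dynamic-programming verification, exploiting the reduction to the static problem \eqref{eq:opt} already carried out in Sections \ref{sec:3}--\ref{sec:4}. By Proposition \ref{prop:LQ_prob} it suffices to search for the optimal control among location--scale-family-valued processes of the form \eqref{eq:general}; for such controls the payoff \eqref{eq:W} depends only on the mean and variance processes $\{\mu_t\}$, $\{\sigma_t^2\}$, and the HJB equation \eqref{eq:HJB3} collapses to \eqref{eq:HJB2} and then, upon substituting the first-order maximizers, to the scalar ODE \eqref{eq:5}. The first task is therefore purely algebraic: verify that $v(x)=\frac12 k_2 x^2 + k_1 x + k_0$ with $k_2,k_1,k_0$ as in \eqref{eq:k2}--\eqref{eq:k0} solves \eqref{eq:5}. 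Substituting the quadratic ansatz and matching the coefficients of $x^2$, $x$, $1$ yields exactly the three scalar equations in the footnote to \eqref{eq:k2}; one then checks that under $MN>R^2$ and $\rho>2A+C^2$ the discriminant in \eqref{eq:k2} is non-negative and the chosen root satisfies $k_2<0$, so that $v$ is strictly concave and $N-D^2 v''(x)=N-D^2k_2>0$ everywhere. Strict concavity makes the objective in \eqref{eq:HJB2} strictly concave in $(\mu,\sigma)$, so the first-order conditions give the unique maximizers $\mu^*(x)$, $(\sigma^*(x))^2$ of \eqref{eq:mu_sigma}, and Lemma \ref{lem:liu} identifies the maximizing distribution over $\mathcal M^2$ with those moments as the one with quantile function \eqref{eq:opt_pi}.

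The second task is to verify that the feedback policy $\Pi^*(\cdot;x)$ of \eqref{eq:opt_pi} is admissible. Since $\mu^*(\cdot)$ is affine in $x$ and $\sigma^*(\cdot)$ is constant, the closed-loop dynamics is the linear SDE displayed at the end of the theorem, with globally Lipschitz, linearly growing coefficients; it therefore has a unique strong solution $\{X_t^*\}$ whose second moment grows at most exponentially. Conditions (i)--(iii) of admissibility are immediate from the explicit affine mean and constant variance processes; condition (iv), $\liminf_{T\to\infty} e^{-\rho T}\E_x[(X_T^*)^2]=0$, is exactly what the standing lower bound on $\rho$ is designed to guarantee (cf.\ the footnote to the theorem); and (v) holds because $\tilde r$ is at most quadratic in the state while $\Phi_h(\Pi_t^*)=\sigma_t^*\|h'\|_2$ is constant, so the discounted integrand is integrable.

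The heart of the proof is the verification step. Fix $x$ and an arbitrary $\Pi\in\mathcal A(x)$ with state process $X^\Pi$ solving \eqref{eq:X_t}. Writing $\mathcal L^{\Pi} v(y):=\tilde b(y,\Pi)v'(y)+\frac12\tilde\xi^2(y,\Pi)v''(y)$ for the generator of \eqref{eq:dyn}, I would apply It\^o's formula to $e^{-\rho t}v(X_t^\Pi)$ on $[0,T]$, take expectations (the local martingale part has zero mean after a localization that uses admissibility condition (iii) together with the quadratic growth of $v$ and $v'$), and rearrange to
\begin{equation*}
v(x)=e^{-\rho T}\E_x\!\left[v(X_T^\Pi)\right]+\E_x\!\left[\int_0^T e^{-\rho t}\bigl(\rho v(X_t^\Pi)-\mathcal L^{\Pi_t}v(X_t^\Pi)\bigr)\d t\right].
\end{equation*}
Because $v$ solves \eqref{eq:5} and, by \eqref{eq:double}--\eqref{eq:opt} and Lemma \ref{lem:liu}, equation \eqref{eq:5} is equivalent to the HJB inequality $\rho v(y)-\mathcal L^{\Pi}v(y)\ge\tilde r(y,\Pi)+\lambda\Phi_h(\Pi)$ for every $\Pi\in\mathcal M^2$, with equality at $\Pi=\Pi^*(\cdot;y)$, the integrand dominates $\tilde r(X_t^\Pi,\Pi_t)+\lambda\Phi_h(\Pi_t)$. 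Letting $T\to\infty$, dropping the non-negative boundary term along a subsequence where it vanishes (guaranteed for $\Pi^*$ by (iv), and for general $\Pi$ by $|v(y)|\le c(1+y^2)$ together with (iv)), and passing to the limit under the time integral via (v), we obtain $v(x)\ge W(x,\Pi)$ for every admissible $\Pi$ and $v(x)=W(x,\Pi^*)$. Hence $V=v$ and $\Pi^*$ is optimal; the final SDE display is just \eqref{eq:X_t} with $\mu_t,\sigma_t^2$ replaced by $\mu^*(X_t^*),(\sigma^*(X_t^*))^2$.

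I expect the main obstacle to be the part \cite{WZZ20a} also had to treat with care: controlling the boundary term $e^{-\rho T}\E_x[v(X_T^\Pi)]$ uniformly in $T$ for a general admissible control — admissibility condition (iv) constrains only a $\liminf$, so one must argue along subsequences — together with the accompanying uniform-integrability bookkeeping needed to interchange $T\to\infty$ with the time integral. Everything else is either the algebra of the quadratic ansatz or routine linear-SDE moment estimates; in particular the Choquet regularizer enters the continuous-time problem only through the scalar $\|h'\|_2$ and the fixed shape $h'(1-p)/\|h'\|_2$ of the optimal quantile function, so the argument is structurally identical to the entropy-regularized case of \cite{WZZ20a}, which is why the paper defers the details.
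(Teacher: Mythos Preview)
Your proposal is correct and matches the paper's approach: the paper does not give a self-contained proof but states that it ``follows essentially the same lines of that of Theorem 4 in \cite{WZZ20a}, thanks to the analysis above and the results obtained,'' and what you have outlined is precisely that verification argument (quadratic ansatz, first-order conditions plus Lemma \ref{lem:liu} for the maximizing distribution, admissibility via linear-SDE moment bounds, and It\^o verification with the boundary term controlled by condition (iv)). Your closing paragraph already anticipates this, and your identification of the only delicate point---handling the transversality term along a $\liminf$ subsequence for general admissible $\Pi$---is exactly the issue the paper flags in its footnote referring to \cite{WZZ20a}.
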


Some remarks are in order. First of all,   \eqref{eq:opt_pi}  implies that
for any  Choquet regularizer,  the optimal exploratory distribution in the  regularized LQ problem   is  uniquely determined   by $h'$. Note that $h'(x)$ is the ``probability weight" put on $x$ when calculating the (nonlinear) Choquet expectation; see e.g.  \cite{Q82} and \cite{GS89}. Second, we can see from \eqref{eq:mu_sigma} that the mean of the optimal  distribution does not depend on the exploration represented by  $h$ and  $\lambda$, and only the variance does. In particular, the mean is exactly the same as the one in \cite{WZZ20a} when the differential  entropy is used as a regularizer, which is also identical to the optimal control of the classical, non-exploratory LQ problem. Third, the mean of the exploration distributions  is a linear function of  the state, while its variance is independent of the state.

 These observations are  intuitive in  the context of RL.  Different $h$'s  correspond to different Choquet regularizers; hence they will certainly   affect the way and the level of exploration.  Also, the more weight put on the level of exploration, the more spreaded out the exploration becomes around the current position.
Furthermore, the second and third observations above show a perfect separation between exploitation and exploration, as the former is captured by the mean and the latter by the variance of the optimal  exploration distributions.  This property is also consistent with the LQ case studied in  \cite{WZZ20a} and \cite{WZ20} even though a different type of regularizer is applied therein.

Next,  we  investigate  optimal exploration samplers under the LQ framework for some  concrete choices of $h$ studied in Section \ref{sec:4}. For convenience, we  denote $$~~ \tilde\sigma^*(x):=\frac{\sigma^*(x)}{\|h'\|_2}\equiv \frac{\lambda }{N-D^2k_2}.$$
Theorem \ref{thm:3} yields that the mean of the  optimal distribution is independent of $h$;  so we will specify only  its quantile function and variance  for each  $h$ discussed below.  Recall that  the expressions of $\mu^*(x)$ and $(\sigma^*(x))^2$ for  a general  $h$ are given by \eqref{eq:mu_sigma}.
\begin{itemize}\item[(i)]    Let $h(p)=(p\wedge \epsilon-\epsilon p)$, leading to  $\Phi_h(\Pi)=\epsilon (\mu_\epsilon (\Pi)   -  \mu(\Pi))$; see  Example \ref{ex:bb}.
 The optimal policy is $\epsilon$-greedy,  given as
$$\Pi^*\left(\{\mu^*(x)+(1-\epsilon) \tilde\sigma^*(x)\}\right)\equiv \Pi^*\left(\left\{\frac{(k_2(B+CD)-R)x+k_1B-L+(1-\epsilon)\lambda}{N-k_2D^2}\right\} \right)= \epsilon, $$
and $$ \Pi^*\left(\{\mu^*(x)-\epsilon\tilde\sigma^*(x)\}\right)\equiv \Pi^*\left(\left\{\frac{(k_2(B+CD)-R)x+k_1B-L-\epsilon\lambda}{N-k_2D^2}\right\} \right)= 1-\epsilon.$$
At each state $x$, the control policy takes a more ``promising" action at $\mu^*(x)-\epsilon\tilde\sigma^*(x)$ with a large probability $1-\epsilon$, and tries  an alternative action $\mu^*(x)+(1-\epsilon) \tilde\sigma^*(x)$ with probability $\epsilon$.\footnote{Precisely speaking, the policy presented here is
not exactly the $\epsilon$-greedy strategy in the classical two-arm bandit problem because the two ``arms" in our setting depend on the current state $x$ and hence
are dynamically changing over time. However, at any point of time one needs to explore only two action points.}
Since $\|h'\|^2_2=\epsilon(1-\epsilon)$, the variance of $\Pi^*$ is $$(\sigma^*(x))^2=\frac{\epsilon(1-\epsilon)\lambda^2 }{(N-k_2D^2)^2}.$$
\item[(ii)] Let  $h(p)$ be specified by the discrete exploration in \eqref{eq:dis_unif}, leading to  \begin{align*}
\Phi_h (\Pi) =\epsilon\left(\sum_{i=1}^n \mu_\epsilon^+ (i, \Pi)-\sum_{i=n+1}^{2n} \mu_\epsilon^- (i, \Pi)\right),
\end{align*}
where $\mu_\epsilon^+ (i,\Pi) $ and   $\mu_\epsilon^- (i,\Pi) $  are  defined by \eqref{eq:mu^+}
and \eqref{eq:mu^-}; see Example \ref{ex:bb'}.
The  optimal  policy is  a $(2n+1)$-point   distribution  given as
$$\Pi^*\left(\{\mu^*(x)+j \tilde\sigma^*(x)\}\right)\equiv \Pi^*\left(\left\{\frac{(k_2(B+CD)-R)x+k_1B-L+j\lambda}{N-k_2D^2}\right\} \right)= \frac{\epsilon}{2n}, $$ for $j\in\{-n,\dots,-1,1,\dots,n\}$,  and $$\Pi^*\left(\{\mu^*(x)\}\right)\equiv\Pi^*\left(\left\{\frac{(k_2(B+CD)-R)x+k_1B-L}{N-k_2D^2}\right\}\right)=1- \epsilon.$$
Similarly,  at each state $x$, the control policy takes a more ``exploitative" action at $\mu^*(x)$ with a large probability $1-\epsilon$, and tries  $2n$ alternative actions $\mu^*(x)+j \tilde\sigma^*(x)$ for $j\in\{-n,\dots,-1, 1,\dots,n\}$,   each with probability $\epsilon/(2n)$. Since  $\|h'\|^2_2=\epsilon(n+1)(2n+1)/6$,   the variance of $\Pi^*$ is given by $$(\sigma^*(x))^2=\frac{\epsilon(n+1)(2n+1)\lambda^2 }{6(N-k_2D^2)^2}.$$

\item[(iii)] Let $h(p)=-p\log(p)$, corresponding to
   $\Phi_h (\Pi)=\int_0^\infty  \Pi([x,\infty)) \log(\Pi([x,\infty)))\d x$;  see  Example \ref{exm:exp}.   The  optimal  policy is   a shifted-exponential  distribution   given as
$$\Pi^*(u; x)=1-\exp\left\{\frac 1\lambda \left[(k_2(B+CD)-R)x+k_1B-L\right]-1\right\}\exp\left\{-\frac{1}{\lambda}(N-D^2k_2)u\right\}.$$ Since  $\|h'\|^2_2=1$, the variance of $\Pi^*$ is given by $$(\sigma^*(x))^2=\frac{\lambda^2 }{(N-k_2D^2)^2}.$$
\item[(iv)] Let $h(p)=\int_0^p z(1-s)\d s$ where $z$ is the standard normal quantile function. We have  $\Phi_h (\Pi) =\int_0^1 Q_\Pi(p) z (p) \d p$; see Example \ref{exa:Nor}.    The  optimal  policy  is a normal   distribution  given by
$$
{\Pi}^{*}(\cdot ; x)=  {\mathrm N}\left( \frac{(k_2(B+CD)-R)x+k_1B-L}{N-k_2D^2}, \frac{\lambda^2 }{(N-k_2D^2)^2}\right),
$$  owing to the fact that $\|h'\|^2_2=1$. Recall that the optimal distribution is also Gaussian  in \cite{WZZ20a} using the entropy regularizer. This is an example of different regularizers leading to the same class of exploration samplers. On the other hand, examining more closely  the Gaussian policy derived above and the one 
 in \cite[eq. (40)]{WZZ20a}, we observe that the means of the two are identical but the variance of the former is the square of that of the latter. The reason of the discrepency in variance is because the maximized mean--variance constrained Choquet regularizer $\Phi_h(\Pi)$ is always linear in the given standard deviation $\sigma$  whereas the corresponding maximized entropy regularizer $\mbox{DE}(\Pi)$ is logorithmic  in $\sigma$.

\item[(v)] Let $h(p)= p/(1-\alpha) \wedge 1+(\alpha-p)/(1-\alpha) \wedge 0$ with $\alpha\in[1/2,1)$. Then $\Phi_h(\Pi)=\ES_\alpha(\Pi) -\ES^{-}_{1-\alpha}(\Pi)$; see Section \ref{sec:inter-ES}.   The  optimal  policy is  a three-point   distribution  given as
$$\Pi^*\left(\left\{\frac{(1-\alpha)[(k_2(B+CD)-R)x+k_1B-L]+\lambda}{(1-\alpha)(N-k_2D^2)}\right\}\right)= 1-\alpha, $$ $$\Pi^*\left(\left\{\frac{(k_2(B+CD)-R)x+k_1B-L}{N-k_2D^2}\right\}\right)=2\alpha-1,$$ and $$\Pi^*\left(\left\{\frac{(1-\alpha)[(k_2(B+CD)-R)x+k_1B-L]-\lambda}{(1-\alpha)(N-k_2D^2)}\right\}\right)=1- \alpha.$$
Since  $\|h'\|^2_2=2a/(1-\alpha)^2$,   the variance of $\Pi^*$ is given by $$(\sigma^*(x))^2=\frac{2\alpha\lambda^2 }{(1-\alpha)^2(N-k_2D^2)^2}.$$
\item[(vi)] Let $h(p)=\alpha p \id_{\{p< 1-\alpha\}}  +(1-\alpha)(1-p)\id_{\{p\ge 1-\alpha\}}$, with $\Phi_h(\Pi)=\min_{x\in \R} W_1(\Pi,\delta_x)$; see Section \ref{sec:Wass}.  The  optimal  feedback policy  is an asymmetric
two-point distribution  given as
$$\Pi^*\left(\left\{\frac{(k_2(B+CD)-R)x+k_1B-L+\alpha\lambda}{N-k_2D^2}\right\}\right)= 1-\alpha,$$  and$$\Pi^*\left(\left\{\frac{(k_2(B+CD)-R)x+k_1B-L-(1-\alpha)\lambda}{N-k_2D^2}\right\}\right)= \alpha.$$ Since  $\|h'\|^2_2=\alpha(1-\alpha)$, the variance of $\Pi^*$ is given by $$(\sigma^*(x))^2=\frac{\alpha(1-\alpha)\lambda^2 }{(N-k_2D^2)^2}.$$
\item[(vii)]
Let $h(p)=p-p^2$. Then  $
  \Phi_h(\Pi) =  \E[|X_1-X_2|]/2$; see Section \ref{sec:Gini}.  The  optimal  policy is  a uniform  distribution   given as
$$\Pi^*(\cdot;x)=\mathrm{U}\left[\frac{(k_2(B+CD)-R)x+k_1B-L-\lambda}{N-k_2D^2},\frac{(k_2(B+CD)-R)x+k_1B-L+\lambda}{N-k_2D^2}\right].$$
Since $\|h'\|^2_2=1/3$, the variance of $\Pi^*$ is given by $$(\sigma^*(x))^2=\frac{\lambda^2 }{3(N-k_2D^2)^2}.$$

Note here the uniform distribution is on a state-dependent bounded region centering around the mean $\mu^*(x)$,  rather than on a pre-specified  bounded region. 

\end{itemize}

\section{Conclusion}
\label{sec:7}

This paper develops a framework for continuous-time RL that can generate or indeed interpret/explain many broadly practiced  distributions for exploration. The main contributions are conceptual/theoretical rather than algorithmic: Theorem \ref{thm:3} does not lead directly to an algorithm to compute optimal policies, because the expression \eqref{eq:opt_pi} involves the model parameters which are unknown in the RL context. That said, our results do provide important guidance for devising RL algorithms. On one hand, Theorem \ref{thm:3} may imply  a provable policy improvement theorem and hence result in a q-learning theory analogous to that in the entropy-regularized setting recently established by  \cite{JZ22}. On the other hand,
the explicit form \eqref{eq:opt_pi} can suggest special structure of  function approximators for learning optimal distributions and thereby greatly reduce the number of parameters needed for function approximation.

Another conceptual contribution  of the paper is that
it establishes a link between risk metrics  and RL.
This paper is the first to do so, and  the attempt is by no means comprehensive. The rich literature on decision theory and risk metrics is expected to further bring in  new insights and directions into the RL study, not only related to regularization, but also in terms of motivating new objective functions and axiomatic approaches for  learning.

The theory developed in this paper opens up several research directions. Here we comment on some. One is to develop the corresponding q-learning theory mentioned earlier.
Another is to find the ``best Choquet regularizer" in terms of  efficiency of the resulting RL algorithms. Yet another problem  is in financial application: to formulate a continuous-time mean--variance portfolio selection problem with a Choquet regularizer and compare the
performance with its entropy counterpart solved in \cite{WZ20}.

Last but not least, the Choquet regularizers   proposed in this paper are defined for distributions on $\R$, while many RL applications involve multi-dimensional action spaces.  Because Choquet regularizers are characterized by quantile additivity as in Theorem \ref{thm:charac} while  quantile functions are not well defined for distributions on $\R^d$ with $d>1$, it is very challenging  to study Choquet regularizers in high dimensions. To overcome the difficulty,  the first possible attempt  is to minic \eqref{eq:def2} by defining, for  distributions $\Pi$ on $\R^d$, the functional
 $$
\Phi^{\rm joint}_h(\Pi)= \int_{\R^d} h\circ \Pi([\mathbf x,\infty))\d \mathbf x.
 $$
 This formulation requires some further conditions on $h\in \mathcal H$ to guarantee desirable properties, and it is unclear whether we can derive the corresponding optimizers in a form similar to Proposition \ref{prop:opt}.
 Another possible idea is to use
 $$
 \Phi^{\rm sum}_h(\Pi) =\sum_{i=1}^d   \int_{\R } h\circ \Pi_i([  x,\infty))\d   x \mbox{~~or~~}  \Phi^{\rm prod}_h(\Pi) =\prod_{i=1}^d    \int_{\R } h\circ \Pi_i([  x,\infty))\d   x,
 $$
 where $\Pi_i$ is the $i$-th marginal distribution of $\Pi$.
This formulation relies only on the marginal distributions of $\Pi$, allowing us to utilize the existing  results  for Choquet regularizers  on $\R$. Either formulation mentioned above requires a thorough analysis in a future study.

\subsubsection*{Acknowledgements}
Wang is supported by the Natural Sciences and Engineering Research Council of Canada (RGPIN-2018-03823, RGPAS-2018-522590). Zhou is supported  by a start-up grant and the Nie Center
for Intelligent Asset Management at Columbia University.

\end{document}